\newcommandx{\unsure}[2][1=]{\todo[linecolor=red,backgroundcolor=red!25,bordercolor=red,#1]{#2}}
\newcommandx{\change}[2][1=]{\todo[linecolor=blue,backgroundcolor=blue!25,bordercolor=blue,#1]{#2}}
\newcommandx{\info}[2][1=]{\todo[linecolor=OliveGreen,backgroundcolor=OliveGreen!25,bordercolor=OliveGreen,#1]{#2}}
\newcommandx{\improvement}[2][1=]{\todo[linecolor=Plum,backgroundcolor=Plum!25,bordercolor=Plum,#1]{#2}}
\newcommand{\loss}{\mathcal{L}}
\newcommand{\thpub}{\theta_{\mathrm{pub}}}
\newcommand{\thpriv}{\theta_{\mathrm{priv}}}
\newcommand{\ypub}{y_{\mathrm{pub}}}
\newcommand{\ypriv}{y_{\mathrm{priv}}}
\newcommand{\lpub}{\loss_{\mathrm{pub}}}
\newcommand{\lpriv}{\loss_{\mathrm{priv}}}
\newcommand{\freeze}{\mathsf{Frozen}}
\title{Partially Encrypted Machine Learning \\ using Functional Encryption}
\author[1, 2]{\large Théo Ryffel}
\author[1]{Edouard Dufour-Sans}
\author[1,3]{Romain Gay}
\author[2, 1]{~\\Francis Bach}
\author[1, 2]{David Pointcheval}
\affil[1]{\normalsize Département d'informatique de l'ENS, ENS, CNRS, PSL University, Paris, France}
\affil[2]{INRIA, Paris, France}
\affil[3]{University of California, Berkeley}
\newtheorem{theorem}{Theorem}[section]
\newtheorem{lemma}[theorem]{Lemma}
\newtheorem{definition}[theorem]{Definition}
\newenvironment{proof}[1][ 
\noindent\textit{Proof}]{
    \vspace*{-\parskip}\noindent\textit{#1.}}{\hfill $\square$
    \medskip}
\newcommand{\cO}{\mathcal{O}}
\newcommand{\append}{\mathsf{append}}
\newcommand{\add}{\mathsf{add}}
\newcommand{\pair}{\mathsf{pair}}
\newcommand{\eq}{\mathsf{eq}}
\newcommand{\chal}{\mathsf{chal}}
\newcommand{\adv}{\mathsf{Adv}}
\newcommand{\Z}{\mathbb{Z}}
\newcommand{\N}{\mathbb{N}}
\newcommand{\Gone}{\mathbb{G}_1}
\newcommand{\Gtwo}{\mathbb{G}_2}
\newcommand{\Gt}{\mathbb{G}_T}
\newcommand{\matM}{\mathbf{M}}
\newcommand{\matW}{\mathbf{W}}
\newcommand{\GL}{\mathsf{GL}}
\DeclareMathOperator{\negl}{negl}
\newcommand{\zero}{\mathbf{0}}
\newcommand{\vs}{\vec{s}}
\newcommand{\vt}{\vec{t}}
\newcommand{\vx}{\vec{x}}
\newcommand{\vy}{\vec{y}}
\newcommand\suchthat{%
 \@ifstar
  {\mathrel{}\middle|\mathrel{}}
  {\mid}%
}
\newcommand\numberthis{\addtocounter{equation}{1}\tag{\theequation}}
\newcommand{\getsr}{\stackrel{{}_\$}{\leftarrow}}
\newcommand{\secpar}{\lambda}
\newcommand{\advA}{\mathcal{A}}
\newcommand{\Exp}{\mathsf{Exp}}
\newcommand{\sk}{\mathsf{sk}}
\newcommand{\pk}{\mathsf{pk}}
\newcommand{\dk}{\mathsf{dk}}
\newcommand{\msk}{\mathsf{msk}}
\newcommand{\Enc}[1][]{{\mathsf{Enc}^{#1}}}
\newcommand{\Dec}[1][]{{\mathsf{Dec}^{#1}}}
\newcommand{\Adv}{\mathsf{Adv}}
\newcommand{\SetUp}{\mathsf{SetUp}\xspace}
\newcommand{\KeyGen}{\mathsf{KeyGen}\xspace}
\newcommand{\ggen}{\mathsf{GGen}}
\newcommand{\PG}{\mathcal{PG}}
\newcommand{\FE}{\mathsf{FE}}
\newcommand{\cY}{\mathcal{Y}}
\newcommand{\ct}{\mathsf{ct}}
\newcommand{\zo}{\{0,1\}}
\newcommand{\cF}{\mathcal{F}}
\newcommand{\cX}{\mathcal{X}}
\newcommand{\R}{\mathbb{R}}
\newcommand{\mat}[1]{\mathbf{#1}}
\newcommand{\argmax}{\operatornamewithlimits{argmax}}
\begin{document}

\maketitle
\vspace{-34px}
\begin{center}
\texttt{\{theo.ryffel,edufoursans,romain.gay,francis.bach,david.pointcheval\}@ens.fr}
\end{center}
\vspace{30px}

\begin{abstract}
  Machine learning on encrypted data has received a lot of attention thanks to recent breakthroughs in homomorphic encryption and secure multi-party computation. It allows outsourcing computation to untrusted servers without sacrificing privacy of sensitive data.
  We propose a practical framework to perform partially encrypted and privacy-preserving predictions which combines adversarial training and functional encryption.
  We first present a new functional encryption scheme to efficiently compute quadratic functions so that the data owner controls what can be computed but is not involved in the calculation: it provides a decryption key which allows one to learn a specific function evaluation of some encrypted data.
  We then show how to use it in machine learning to partially encrypt neural networks with quadratic activation functions at evaluation time, and we provide a thorough analysis of the information leaks based on indistinguishability of data items of the same label.
  Last, since most encryption schemes cannot deal with the last thresholding operation used for classification, we propose a training method to prevent selected sensitive features from leaking, which adversarially optimizes the network against an adversary trying to identify these features. This is interesting for several existing works using partially encrypted machine learning as it comes with little reduction on the model's accuracy and significantly improves data privacy.

\end{abstract}

\section{Introduction} \vspace{-1px}

As both public opinion and regulators are becoming increasingly aware of issues of data privacy, the area of privacy-preserving machine learning has emerged with the aim of reshaping the way machine learning deals with private data. Breakthroughs in fully homomorphic encryption (FHE) \citep{FastHEofDiNN,FastFHE01sec} and secure multi-party computation (SMPC) \citep{MPCfromSHE,Securenn} have made computation on encrypted data practical and implementations of neural networks to do encrypted predictions have flourished \citep{Chameleon,PySyft,sealcrypto,barni2011privacy,bost2015machine}.

However, these protocols require the data owner encrypting the inputs and the parties performing the computations to interact and communicate in order to get decrypted results, which we would like to avoid in some cases, like spam filtering, for example, where the email receiver should not need to be online for the email server to classify incoming email as spam or not. Functional encryption (FE) \citep{TCC:BonSahWat11,cryptoeprint:2010:556} in return does not need interaction to compute over encrypted data: it allows users to receive in plaintext specific functional evaluations of encrypted data: for a function $f$, a functional decryption key can be generated such that, given any ciphertext with underlying plaintext $x$, a user can use this key to obtain $f(x)$ without learning $x$ or any other information than $f(x)$. It stands in between traditional public key encryption, where data can only be directly revealed, and FHE, where data can be manipulated but cannot be revealed: it allows the user to tightly control what is disclosed about his data.

\subsection{Use cases}

\textbf{Spam filtering.} Consider the following scenario: Alice uses a secure email protocol which makes use of functional encryption. Bob uses Alice's public key to send her an email, which lands on Alice's email provider's server. Alice gave the server keys that enable it to process the email and take a predefined set of appropriate actions without her being online. The server could learn how urgent the email is and decide accordingly whether to alert Alice. It could also detect whether the message is spam and store it in the spam box right away.

\textbf{Privacy-preserving enforcement of content policies} Another use case could be to enable platforms, such as messaging apps, to maintain user privacy through end-to-end encryption, while filtering out content that is illegal or doesn't adhere to policies the site may have regarding, for instance, abusive speech or explicit images.


These applications are not currently feasible within a reasonable computing time, as the construction of FE for all kinds of circuits is essentially equivalent to \emph{indistinguishable obfuscation} \cite{C:BGIRSVY01,FOCS:GGHRSW13}, concrete instances of which have been shown insecure, let alone efficient.  However, there exist practical FE schemes for the inner-product functionality \citep{PKC:ABDP15,C:AgrLibSte16} and more recently for quadratic computations \citep{C:BCFG17}, that is usable for practical applications.

\subsection{Our contributions}

We introduce a new FE scheme to compute quadratic forms which outperforms that of Baltico et al.~\cite{C:BCFG17} in terms of complexity, and provide an efficient implementation of this scheme. We show how to use it to build privacy preserving neural networks, which perform well on simple image classification problems. Specifically, we show that the first layers of a polynomial network can be run on encrypted inputs using this quadratic scheme.

In addition, we present an adversarial training technique to process these first layers to improve privacy, so that their output, which is in plaintext, cannot be used by adversaries to recover specific sensitive information at test time. This adversarial procedure is generic for semi-encrypted neural networks and aims at reducing the information leakage, as the decrypted output is not directly the classification result but an intermediate layer (i.e., the neuron outputs of the neural network before thresholding). This has been overlooked in other popular encrypted classification schemes (even in FHE-based constructions like \cite{cryptonets} and \cite{FastHEofDiNN}), where the $\mathsf{argmax}$ operation used to select the class label is made in clear, as it is either not possible with FE, or quite inefficient with FHE and SMPC.




We demonstrate the practicality of our approach using a dataset inspired from MNIST \citep{mnist}, which is made of images of digits written using two different fonts. We show how to perform classification of the encrypted digit images in less than $3$ seconds with over $97.7\%$ accuracy while making the font prediction a hard task for a whole set of adversaries.

This paper builds on a preliminary version available on the \hyperlink{https://eprint.iacr.org/2018/206}{Cryptology ePrint Archive} at \url{eprint.iacr.org/2018/206}. All code and implementations can be found online at \url{github.com/LaRiffle/collateral-learning} and \url{github.com/edufoursans/reading-in-the-dark}.

\section{Background Knowledge} \vspace{-1px}

\subsection{Quadratic and Polynomial Neural Networks}

Polynomial neural networks are a class of networks which only use linear elements, like fully connected linear layers, convolutions but with average pooling, and model activation functions with polynomial approximations when not simply the square function. Despite these simplifications, they have proved themselves satisfactorily accurate for relatively simple tasks (\cite{cryptonets} learns on MNIST and \cite{HCNNAlexNetMoment} on CIFAR10 \cite{cifar10}). The simplicity of the operations they build on guarantees good efficiency, especially for the gradient computations, and works like \cite{DBLP:journals/corr/LivniSS14} have shown that they can achieve convergence rates similar to those of networks with non-linear activations. 

In particular, they have been used for several early stage implementations in cryptography  \citep{cryptonets,FastFHE01sec,eprint:2017:1114} to demonstrate the usability of new protocols for machine learning. 
However, the $\mathsf{argmax}$ or other thresholding function present at the end of a classifier network to select the class among the output neurons cannot be conveniently handled, so several protocol implementations (among which ours) run polynomial networks on encrypted inputs, but take the $\mathsf{argmax}$ over the decrypted output of the network. This results in potential information leakage which could be maliciously exploited.

\subsection{Functional Encryption}\vspace{-3px} 

Functional encryption extends the notion of public key encryption where one uses a public key $\pk$ and a secret key $\sk$ to respectively encrypt and decrypt some data. More precisely, $\pk$ is still used to encrypt data, but for a given function $f$, $\sk$ can be used to derive a functional decryption key $\dk_{f}$ which will be shared to users so that, given a ciphertext of $x$, they can decrypt $f(x)$ but not $x$. In particular, someone having access to $\dk_{f}$ cannot learn anything about $x$ other than $f(x)$. Note also that functions cannot be composed, since the decryption happens within the function evaluation. Hence, only single quadratic functions can be securely evaluated. A formal definition of functional encryption is provided in Appendix \ref{appendix:FE}.

\textbf{Perfect correctness.} Perfect correctness is achieved in functional encryption: $\forall x \in \cX$, $f \in \cF$, $\Pr[\Dec(\dk_f,\ct)=f(x)]=1$, where  $\dk_f \gets \KeyGen(\msk,f)$ and $\ct \gets \Enc(\pk,x)$. Note that this property is a very strict condition, which is not satisfied by exisiting fully homomorphic encryption schemes (FHE), such as \citep{LWE,C:GenSahWat13}.

\subsection{Indistinguishability and security}\vspace{-3px} 

To assess the security of our framework, we first consider the FE scheme security and make sure that we cannot learn anything more than what the function is supposed to output given an encryption of $x$. Second, we analyze how sensitive the output $f(x)$ is with respect to the private input $x$. For both studies, we will rely on \textit{indistinguishability} \cite{GolMic84}, a classical security notion which can be summed up in the following game: an adversary provides two input items to the challenger (here our FE algorithm), and the challenger chooses one item to be encrypted, runs encryption on it before returning the output. The adversary should not be able to detect which input was used. This is known as IND-CPA security in cryptography and a formal definition of it can be found in Appendix \ref{app:ind-cpa}.

We will first prove that our quadratic FE scheme achieves IND-CPA security, then, we will use a relaxed version of indistinguishability to measure the FE output sensitivity. More precisely, we will make the hypothesis that our input data can be used to predict public labels but also sensitive private ones, respectively $\ypub$ and $\ypriv$. Our quadratic FE scheme $q$ aims at predicting $\ypub$ and an adversary would rather like to infer $\ypriv$. In this case, the security game consists in the adversary providing two inputs $(x_0, x_1)$ labelled with the same $\ypub$ but a different $\ypriv$ and then trying to distinguish which one was selected by the challenger, given its output $q(x_b)$, $b \in \{0, 1\}$. One way to do this is to measure the ability of an adversary to predict $\ypriv$ for items which all belong to the same $\ypub$ class. 
 
In particular, note that we do not consider approaches based on input reconstruction (as done by \cite{Carpov2018IlluminatingTD}) because in many cases, the adversary is not interested in reconstructing the whole input, but rather wants to get insights into specific characteristics.


Another way to see this problem is that we want the sensitive label $\ypriv$ to be independent from the decrypted output $q(x)$ (which is a proxy to the prediction), given the true public label $\ypub$. This independence notion is known as \textit{separation} and is used as a fairness criterion in \cite{fairnessML} if the sensitive features can be misused for discrimination.

 \vspace{-2px}
\section{Our Context for Private Inference} \vspace{-2px}
\subsection{Classifying in two directions} \vspace{-2px}

We are interested in specific types of datasets $(\vec x_i)_{i=1,...,n}$ which have public labels $\ypub$ but also private ones $\ypriv$. Moreover, these different types of labels should be entangled, meaning that they should not be easily separable, unlike the color and the shape of an object in an image for example which can be simply separated.
 For example, in the spam filtering use case mentioned above, $\ypub$ would be a spam flag, and $\ypriv$ would be some marketing information highlighting areas of interest of the email recipient like technology, culture, etc. In addition, to simplify our analysis, we assume that classes are balanced for all types of labels, and that labels are independent from each other given the input: $\forall \vec x,  P(\ypub, \ypriv | \vec x) = P(\ypub | \vec x) P(\ypriv | \vec x)$. 
To illustrate our approach in the case of image recognition, we propose a synthetic dataset inspired by MNIST which consists of 60 000 grey scaled images of $28\times 28$ pixels representing digits using two fonts and some distortion, as shown in Figure~\ref{fig:dataset}. Here, the public label $\ypub$ is the digit on the image and the private one $\ypriv$ is the font used to draw it.

\begin{figure}[htb!]
  \centering
  \includegraphics[width=0.9\linewidth]{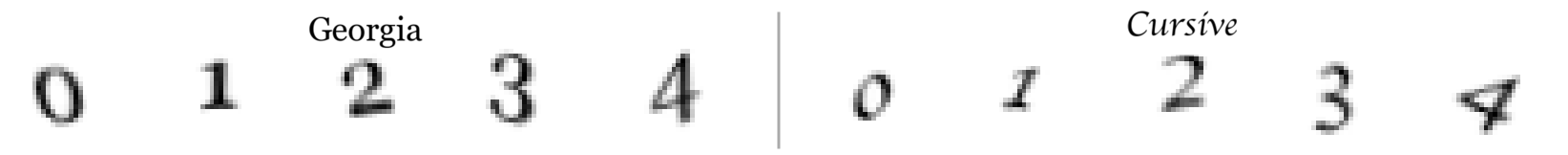}
  \caption{Artificial dataset inspired from MNIST with two types of labels.}\label{fig:dataset}
\end{figure}

We define two tasks: a \textit{main} task which tries to predict $\ypub$ using a partially-encrypted polynomial neural network with functional encryption, and a \textit{collateral} task which is performed by an adversary who tries to leverage the output of the FE encrypted network at test time to predict $\ypriv$. Our goal is to perform the main task with high accuracy while making the collateral one as bad as random predictions. In terms of indistinguishability, given a dataset with the same digit drawn, it should be infeasible to detect the used font.

\subsection{Equivalence with a Quadratic Functional Encryption scheme}

We now introduce our new framework for quadratic functional encryption and show that it can be used to partially encrypt a polynomial network.

\subsubsection{Functional Encryption for Quadratic Polynomials}

We build an efficient FE scheme for the set of quadratic functions defined as $\cF_{n,B_x,B_y,B_q} \subset \{ q: [-B_x,B_x]^n \times [-B_y,B_y]^n \rightarrow \Z\}$, where $q$ is described as a set of bounded coefficients $\{q_{i,j} \in [-B_q,B_q]\}_{i,j \in [n]}$ and for all vectors  $(\vx, \vec y)$, we have $\textstyle q(\vx,\vy) = \sum_{i,j \in [n]} q_{i,j} x_i y_j.$

A complete version of our scheme is given in Figure \ref{fig:FE}, but here are the main ideas and notations. First note that we use bilinear groups, \textit{i.e.}, a set of prime-order groups $\Gone$, $\Gtwo$ and $\Gt$ together with a bilinear map $e: \Gone \times \Gtwo \rightarrow \Gt$ called \textit{pairing} which satisfies $e(g_1^a , g_2^b) = e(g_1, g_2)^{ab}$ for any exponents $a,b \in \Z$: one can compute quadratic polynomials in the exponent. Here, $g_1$, $g_2$ are generators of $\Gone$ and $\Gtwo$ and $g_T := e(g_1, g_2)$ is a generator of the target group~$\Gt$. 
A pair of vectors $(\vs,\vt)$ is first selected and constitutes the private key $\msk$, while the public key is $(g_1^{\vs},g_2^{\vt})$.

Encrypting $(\vx,\vy)$ roughly consists of masking $g_1^{\vx}$ and $g_2^{\vy}$ with $g_1^{\vs}$ and $g_2^{\vt}$, which allows any user to compute $g_T^{q(\vx,\vy)-q(\vs,\vt)}$ with for any quadratic function $q$, using the pairing. The functional decryption key for a specific $q$ is $g_T^{q(\vs,\vt)}$ which allows to get $g_T^{q(\vx,\vy)}$. Last, taking the discrete logarithm gives access to $q(\vx,\vy)$ (discrete logarithm for small exponents is easy). Security uses the fact that it is hard to compute $\msk$ from $\pk$ (discrete logarithm for large exponents $\vs,\vt$ is hard to compute). More details are given in Appendix \ref{appendix:QFEproofs}\footnote{Note that we only present a simplified scheme here. In particular, the actual encryption is randomized, which is necessary to achieve IND-CPA security.}

\begin{figure*}[ht]
  \begin{center}
    \begin{tabular}{|l|}\hline
        \underline{$\SetUp(1^\lambda,\cF_{n,B_x,B_y,B_f})$:}\\
        $\PG:=(\Gone,\Gtwo,p,g_1,g_2,e)\gets \ggen(1^\lambda)$, $\vs, \vt \getsr \Z^n_p$,
        $\msk := (\vs,\vt)$, $\pk := \left(\PG,g_1^{\vs}, g_2^{\vt}\right)$\\
        Return $(\pk,\msk)$.\\
        \\
        \underline{$\Enc\big(\pk, (\vx,\vy)\big)$:}\\
        $\gamma \getsr \Z_p$, $\matW \getsr \GL_2$, for all $i \in [n]$, $\vec{a}_i := (\matW^{-1})^\top \begin{pmatrix}x_i \\ \gamma s_i\end{pmatrix}$, $\vec{b}_i := \matW \begin{pmatrix} y_i \\ -t_i\end{pmatrix}$\\
        Return $\ct := \left(g_1^{\gamma}, \{g_1^{\vec{a}_i},g_2^{\vec{b}_i}\}_{i \in [n]}\right) \in \Gone \times (\Gone^2 \times \Gtwo^2)^n$\\
        \\
        \underline{$\KeyGen(\msk,q)$:}\\
        Return $\dk_f := \left(g_2^{q(\vs,\vt)},q\right) \in \Gtwo \times \cF_{n,B_x,B_y,B_q}$.\\
        \\
        \underline{$\Dec\left(\pk,\ct := \left(g_1^\gamma, \{g_1^{\vec{a}_i},g_2^{\vec{b}_i}\}_{i \in [n]}\right) ,\dk_q := \left(g_2^{q(\vs,\vt)},q\right)\right)$:}\\[2ex]
        $out := e(g_1^\gamma,g_2^{q(\vs,\vt)})\cdot \prod_{i,j \in [n]} e\big(g_1^{\vec{a}_i},g_2^{\vec{b}_j}\big)^{q_{i,j}}$\\
        Return $\log(out) \in \Z$.
        \\[1.5ex]\hline
    \end{tabular}
    \caption{Our functional encryption scheme for quadratic polynomials.}\label{fig:FE}
  \end{center}
\end{figure*}

\begin{theorem}[Security, correctness and complexity]
    The FE scheme provided in Figure \ref{fig:FE}:
    \begin{itemize}[noitemsep,topsep=0pt,parsep=0pt,partopsep=0pt]
        \item is IND-CPA secure in the Generic Bilinear Group Model,
        \item verifies $\log(out) = q(\vx,\vy)$ and satisfies perfect correctness,
        \item has a overall decryption complexity of ~$ 2n^2(E+P) + P + D$,
    \end{itemize}
    where $E$, $P$ and $D$ respectively denote exponentiation, pairing and discrete logarithm complexities.
\end{theorem}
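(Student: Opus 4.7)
The proof splits naturally into three parts, one per claim in the theorem, and the plan is to address them in order of increasing difficulty.

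\textbf{Correctness.} My approach is a direct algebraic verification. The key observation is that for every $i,j \in [n]$, the two-dimensional masking vectors satisfy
$\vec{a}_i^\top \vec{b}_j = \begin{pmatrix}x_i & \gamma s_i\end{pmatrix}\matW^{-1}\matW\begin{pmatrix}y_j \\ -t_j\end{pmatrix} = x_i y_j - \gamma s_i t_j$,
so the $\matW$ factors cancel and a single cross-term $-\gamma s_i t_j$ of the mask survives. Interpreting $e(g_1^{\vec{a}_i},g_2^{\vec{b}_j})$ componentwise, the pairing yields $g_T^{\vec{a}_i^\top \vec{b}_j}$, and summing the weighted exponents gives $\prod_{i,j} e(g_1^{\vec{a}_i},g_2^{\vec{b}_j})^{q_{i,j}} = g_T^{q(\vx,\vy) - \gamma\, q(\vs,\vt)}$. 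Multiplying by $e(g_1^\gamma,g_2^{q(\vs,\vt)}) = g_T^{\gamma\, q(\vs,\vt)}$ cancels the residual mask, leaving $out = g_T^{q(\vx,\vy)}$; since no step involves a probabilistic failure, taking the discrete logarithm produces $q(\vx,\vy)$ with probability one, yielding perfect correctness.

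\textbf{Complexity.} This reduces to an operation count on $\Dec$. For each of the $n^2$ pairs $(i,j)$, I would raise the two-component handle $g_1^{\vec{a}_i}$ to the integer $q_{i,j}$ (two exponentiations) and then pair the result componentwise with $g_2^{\vec{b}_j}$ (two pairings), contributing $2n^2 E + 2n^2 P = 2n^2(E+P)$ in total. The mask-cancelling factor $e(g_1^\gamma,g_2^{q(\vs,\vt)})$ contributes the extra $P$, and recovering $\log(out)$ from the result in $\Gt$ accounts for the final $D$.

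\textbf{Security.} This is where the real work lies. I would model the scheme in the Generic Bilinear Group Model and treat every scalar appearing in $\SetUp$ and $\Enc$ as a formal indeterminate: the entries of $\vs, \vt$, the freshness $\gamma$ and the matrix $\matW$ for the challenge ciphertext (with independent copies for any additional ciphertexts), together with the challenge plaintexts $(\vx_b,\vy_b)$ depending on the bit $b$. The adversary's view then consists of a polynomially bounded list of handles to polynomials in these variables, namely the public-key elements, the ciphertext components $g_1^\gamma,g_1^{\vec{a}_i},g_2^{\vec{b}_i}$, and the functional keys $g_2^{q(\vs,\vt)}$ for every queried $q$ satisfying $q(\vx_0,\vy_0)=q(\vx_1,\vy_1)$. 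By the standard Schwartz--Zippel-based argument for GGM, distinguishing $b=0$ from $b=1$ reduces to producing via group operations and a single layer of pairings two formal polynomials that are syntactically distinct but collide on the secrets in both worlds. The core of my proof is a monomial classification: I would enumerate the admissible monomial shapes in $\Gone,\Gtwo,\Gt$ and show that each either (i) is independent of $b$, or (ii) gets cancelled by the randomization through $\matW$, which hides the individual $x_i$ and $\gamma s_i$ entries behind invertible linear combinations, or (iii) reduces to a linear combination $\sum_q \alpha_q\, q(\vx_b,\vy_b)$ which by the query constraint is itself invariant under $b$. The main obstacle I expect is completeness of this enumeration, particularly for cross-terms obtained by pairing ciphertext components with functional decryption keys (possibly across several ciphertexts); I would defer the detailed casework and the resulting bound to Appendix~\ref{appendix:QFEproofs}.
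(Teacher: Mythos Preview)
Your correctness and complexity arguments are essentially identical to the paper's and are fine.

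For security, however, there is a concrete gap. You propose to treat ``the entries of $\vs,\vt$, the freshness $\gamma$ and the matrix $\matW$'' as formal indeterminates and then speak of ``handles to polynomials in these variables''. But the ciphertext component $\vec a_i = (\matW^{-1})^\top\SmallMatrix{x_i\\ \gamma s_i}$ is \emph{not} polynomial in the entries of $\matW$: writing $\matW=\SmallMatrix{a&b\\c&d}$, the entries of $\vec a_i$ carry a factor $1/(ad-bc)$. This means the distribution is not polynomially induced in the sense required by the Schwartz--Zippel / generic-group machinery, and your monomial classification cannot even be stated as written. The paper addresses exactly this point with a dedicated hybrid: it replaces the generator $g_1$ by $g_1^{ad-bc}$ (incurring only a $3/p$ statistical loss from the event $ad-bc=0$), which clears all denominators and makes every exponent in $\pk$, the challenge ciphertext, and the functional keys a genuine polynomial in $a,b,c,d,\gamma,\vec S,\vec T$. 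Only after this step can one invoke the master theorem of~\cite{C:BCFG17} and reduce to a purely symbolic condition $\mathsf{Eq}_0(\matM)\Leftrightarrow\mathsf{Eq}_1(\matM)$, which the paper then discharges by a careful coefficient-extraction argument (their Rules~1 and~2) rather than an informal three-way case split.

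Two smaller points: there is only one challenge ciphertext in the IND-CPA game, so your remark about ``independent copies for any additional ciphertexts'' and ``across several ciphertexts'' is out of place here; and your case~(ii), that $\matW$ ``hides the individual $x_i$ and $\gamma s_i$ entries behind invertible linear combinations'', is precisely the delicate part---the paper's symbolic lemma spends most of its effort showing that the only surviving $b$-dependent contribution collapses to a combination of the $f(\vx^{(b)},\vy^{(b)})$ for queried $f$, which is where the constraint $f(\vx^{(0)},\vy^{(0)})=f(\vx^{(1)},\vy^{(1)})$ finally bites.
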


Our scheme outperforms previous schemes for quadratic FE with the same security assumption, like the one from \cite[Sec. 4]{C:BCFG17} which achieves $3n^2(E+P) + 2P + D$ complexity and uses larger ciphertexts and decryption keys. Note that the efficiency of the decryption can even be further optimized for those quadratic polynomials used that are relevant to our application (see Section \ref{sec:equivFENN}).

\textbf{Computing the discrete logarithm for decryption.}
Our decryption requires computing discrete logarithms of group elements in base $g_T$, but contrary to previous works like \cite{SCN:KLMMRW18} it is independent of the ciphertext and the functional decryption key used to decrypt. This allows to pre-compute values and dramatically speeds-up decryption.

\subsubsection{Equivalence of the FE scheme with a Quadratic Network}\label{sec:equivFENN}

We classify data which can be represented as a vector $\vec x \in [0,B]^n$ (in our case, the size $B=255$, and the dimension $n=784$) and we first build models $(q_i)_{i\in[\ell]}$ for each public label $i \in [\ell]$, such that our prediction $\ypub$ for $\vec x$ is $\argmax_{i\in[\ell]} q_i(\vec x)$.

\textbf{Quadratic polynomial on $\R^n$.} The most straightforward way to use our FE scheme would be for us to learn a model $(\mat Q_i)_{i\in[\ell]}\in {\left(\R^{n \times n}\right)}^{\ell}$, which we would then round onto   integers, such that $q_i(\vec x) = \vec x^\top \mat Q_i \vec x$, $\forall i \in [\ell]$. This is a unnecessarily powerful model in the case of MNIST as it has ${\ell}n^2$ parameters ($n=784$), and the resulting number of pairings to compute would be unreasonably large.

\textbf{Linear homomorphism.} The encryption algorithm of our FE scheme is linearly homomorphic with respect to the plaintext: given an encryption of $(\vec x, \vec y)$ under the secret key $\msk := (\vec s, \vec t)$, one can efficiently compute an encryption of $(\vec u^\top \vec x, \vec v^\top \vec y)$ under the secret key $\msk' := (\vec u^\top \vec s, \vec v^\top \vec t)$ for any linear combination $\vec u,\vec v$ (see proof in Appendix \ref{app:QFEequiv}). Any vector $\vec v$ is a column, and $\vec v^\top$ is a row.

Therefore, if $q$ can be written $q(\vec x, \vec y) = (\mat U \vec x)^\top \mat M (\mat V \vec y)$ for all ($\vec x$, $\vec y$), with $\mat U, \mat V \in \Z_p^{d \times n}$ projection matrices and $\mat M \in \Z_p^{d \times d}$, it is more efficient to first compute the encryption of $(\mat U \vec x, \mat V \vec y)$ from the encryption of $(\vec x, \vec y)$, and then to apply the functional decryption on these ciphertexts, because their underlying plaintexts are of reduced dimension $d < n$. This reduces the number of exponentiations from $2n^2$ to $2dn$ and the number of pairing computations from $2n^2$ to $2d^2$ for a single $q_i$. This is a major efficiency improvement for small $d$, as pairings are the main bottleneck in the computation.

\textbf{Projection and quadratic polynomial on $\R^d$.} We can use this and apply the quadratic polynomials on projected vectors: we learn $\mat P \in \R^{n\times{}d}$ and $(\mat Q_i)_{i\in[\ell]}\in {\left(\R^{d \times d}\right)}^{\ell}$, and our model is $q_i(\vec x) = (\mat P\vec x)^\top \mat Q_i (\mat P\vec x)$, $\forall i \in [\ell]$. We only need $2\ell d^2$ pairings and since the same $\mat P$ is used for all $q_i$, we only compute once the encryption of $\mat P\vec x$ from the encryption of $\vec x$. Better yet, we can also perform the pairings only once, and then compute the scores by exponentiating with different coefficients the same results of the pairings, thus only requiring $2d^2$ pairing evaluations, independently of $\ell$.


\textbf{Degree 2 polynomial network, with one hidden layer.} To further reduce the number of pairings, we actually limit ourselves to diagonal matrices, and thus rename $\mat Q_i$ to $\mat D_i$. We find that the gain in efficiency associated with only computing $2d$ pairings is worth the small drop in accuracy. The resulting model is actually a polynomial network of degree 2 with one hidden layer of $d$ neurons and the activation function is the square. In the following experiments we take $d = 40$.

Our final encrypted model can thus be written as
$q_i(\vec x) = (\mat P\vec x)^\top \mat D_i (\mat P\vec x), \forall i \in [\ell]$, where we add a bias term to $\vec x$ by replacing it with $\vec x = ( 1 ~ x_{1}  \dots   x_{n})$.

\textbf{Full network.} The result of the quadratic $(q_i(\vec x))_{i \in [\ell]}$ (i.e., of the private quadratic network) is now visible in clear. As mentioned above, we cannot compose this block several times as it contains decryption, so this is currently the best that we can have as an encrypted computation with FE.  Instead of simply applying the $\mathsf{argmax}$ to the cleartext output of this privately-evaluated quadratic network to get the label, we observe that adding more plaintext layers on top of it helps improving the overall accuracy of the main task. We have therefore a neural network composed of a private and a public part, as illustrated in Figure \ref{fig:encryptnet}.

\begin{figure}[!htb]
   \begin{minipage}{0.33\textwidth}
     \centering
     \includegraphics[width=1\linewidth]{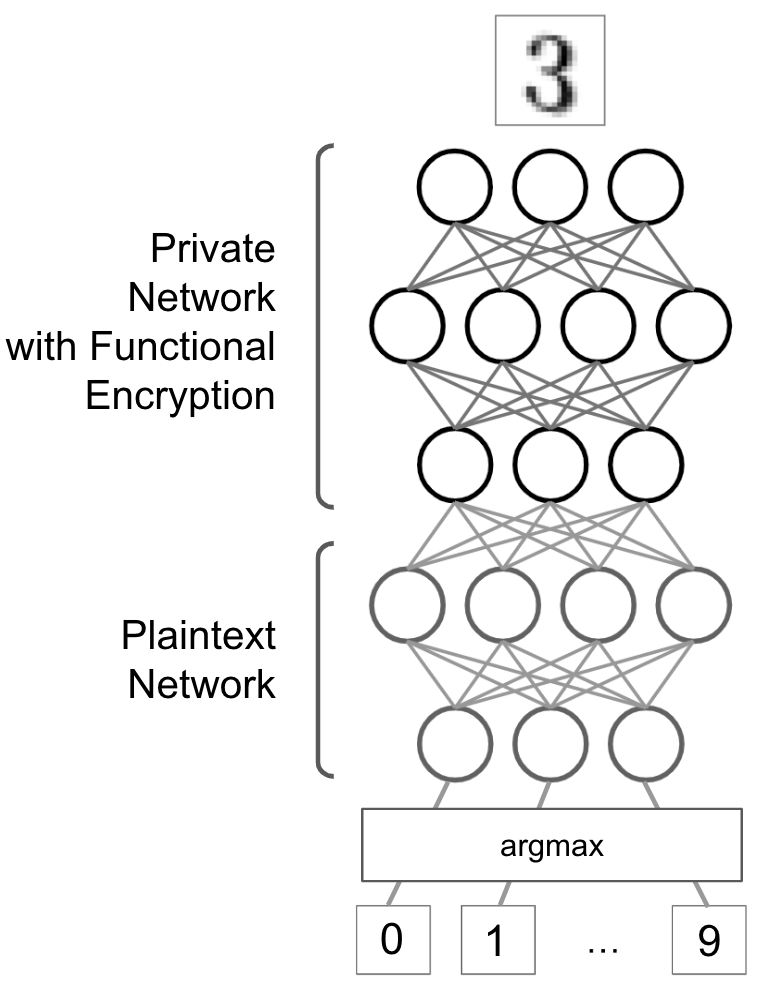}
      \caption{Semi-encrypted network using quadratic FE.}\label{fig:encryptnet}
   \end{minipage}\hfill
   \begin{minipage}{0.63\textwidth}
     \centering
     \includegraphics[width=1\linewidth]{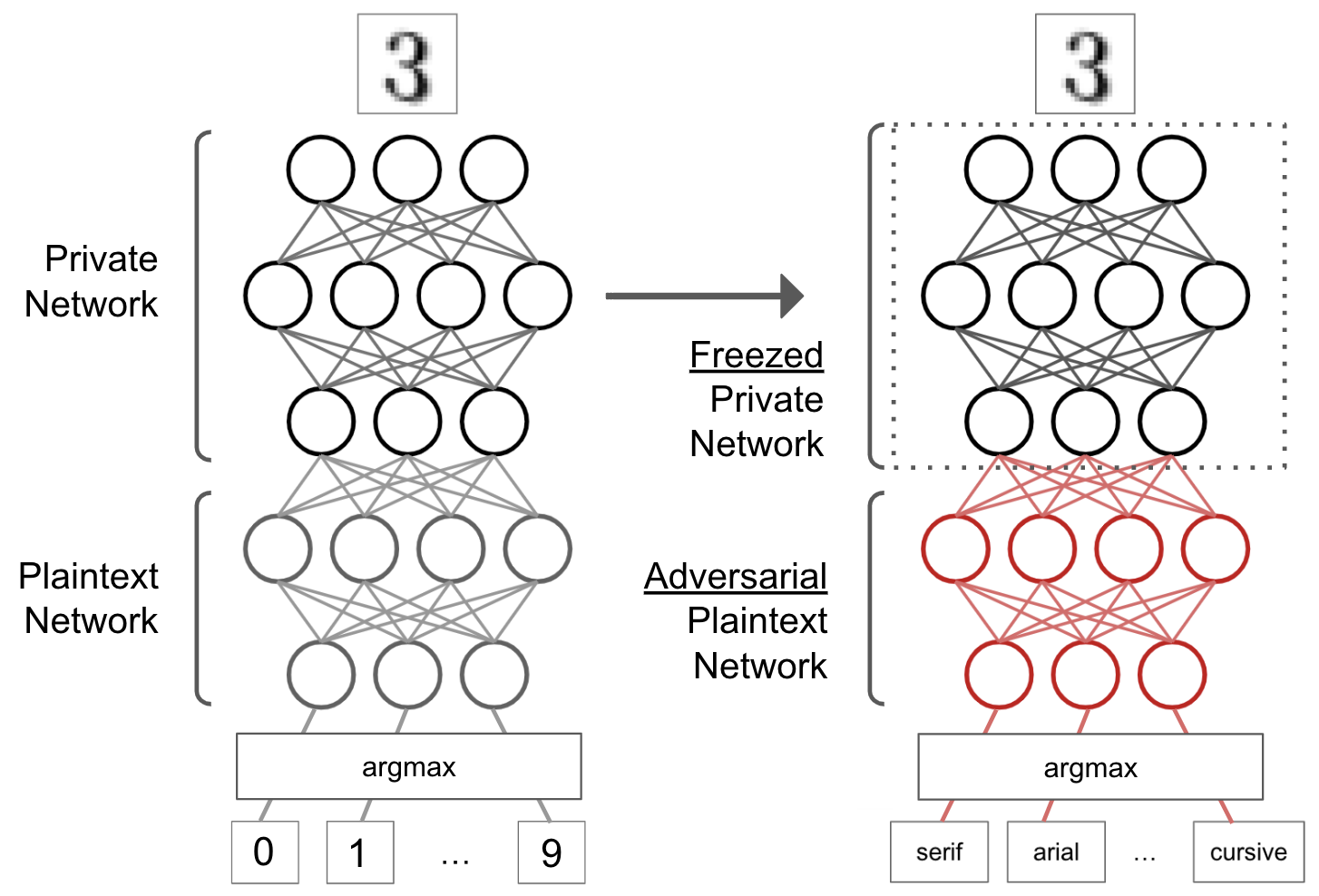}
     \caption{Semi-encrypted network with an adversary trying\\ to recover private labels from the private quadratic network.}\label{fig:encryptnetadv}
   \end{minipage}
\end{figure}\vspace{-10px}

\subsection{Threat of Collateral Learning}\label{sec:CLthreat}

A typical adversary would have a read access to the main task classification process. It would leverage the output of the quadratic network to try to learn the font used on ciphered images. To do this, all that is needed is to train another network on top of the quadratic network so that it learns to predict the font, assuming some access to labeled samples (which is the case if the adversary encrypts itself images and provides them to the main task at evaluation time). Note that in this case the private network is not updated by the collateral network as we assume it is only provided in read access after the main task is trained. Figure \ref{fig:encryptnetadv} summarizes the setting.

We implemented this scenario using as adversary a neural network composed of a first layer acting as a decompression step where we increase the number of neurons from $10$ back to $28 \times 28$ and add on top of it a classical\footnote{https://github.com/pytorch/examples/blob/master/mnist/main.py} convolutional neural network (CNN). This structure is reminiscent of autoencoders \cite{autoencoders} where the bottleneck is the public output of the private net and the challenge of this autoencoder is to correctly memorize the public label while forgetting the private one. What we observed is  striking: in less than $10$ epochs, the collateral network leverages the $10$ public neurons output and achieves $93.5\%$ accuracy for the font prediction. As expected, it gets even worse when the adversary is assessed with the indistinguishability criterion because in that case the adversary can work on a dataset where only a  specific digit is represented: this reduces the variability of the samples and makes it easier to distinguish the font; the probability of success is indeed of $96.9\%$.

We call \textit{collateral learning} this phenomenon of learning unexpected features and will show in the next section how to implement counter-measures to this threat in order to improve privacy.

\section{Defeating Collateral Learning} \vspace{-2px}
\subsection{Reducing information leakage}\label{ReduceLeak}

Our first approach is based on the observation that we leak many bits of information. We first investigate whether we can reduce the number of outputs of the privately-evaluted network, as adding extra layers on top of the private network makes it no longer necessary to keep $10$ of them. 
\begin{wrapfigure}{r}{0.5\textwidth}
  \centering
  \includegraphics[width=1\linewidth]{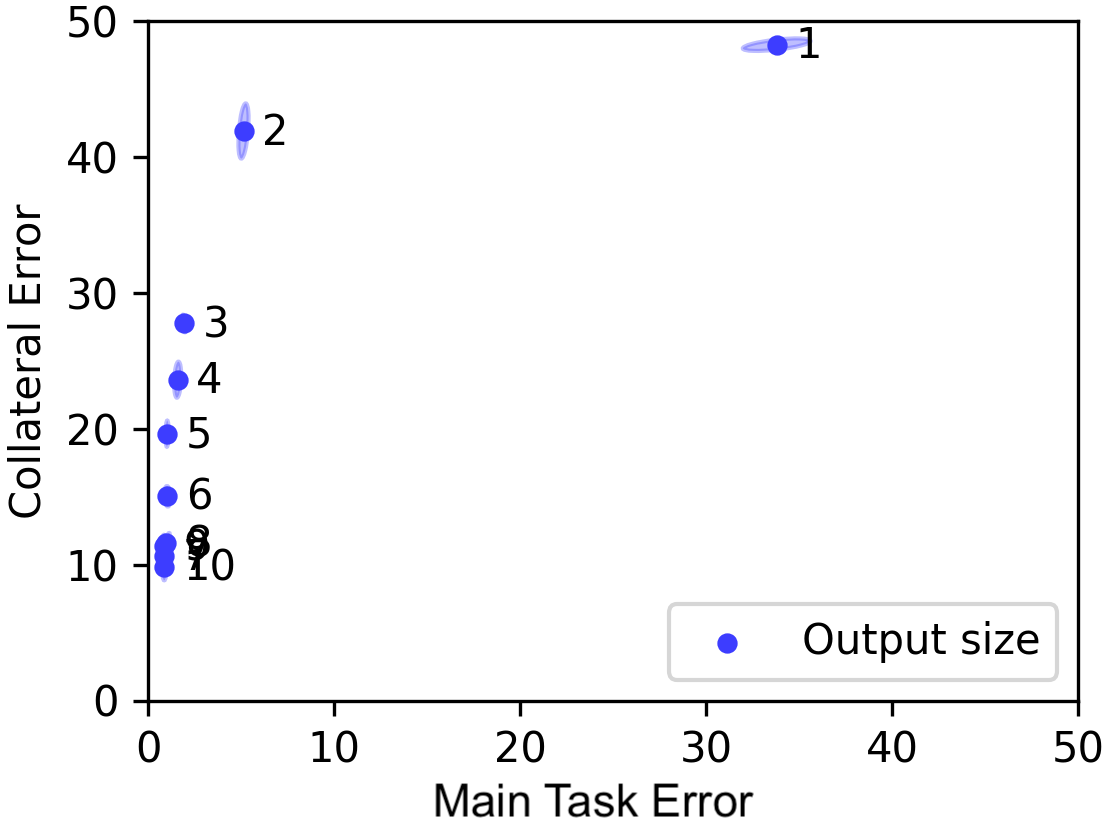}
  \caption{Trade-off between main and collateral accuracies depending on the private output size. }\label{fig:pareto_nosabo}
\end{wrapfigure}
The intuition is that if the information that is relevant to the main task can fit in less than 10 neurons, then the extra neurons would leak unnecessary information.
We have therefore a trade-off between reducing too much and losing desired information or keeping a too large output and having an important leakage. We can observe this through the respective accuracies as it is shown in Figure \ref{fig:pareto_nosabo}, where the main and adversarial networks are CNNs as in Section \ref{sec:CLthreat} with 10 epochs of training using 7-fold cross validation.
What we observe here is interesting: the main task does not exhibit significant weaknesses even with size 3 where we drop to $97.1\%$ which is still very good although $2\%$ under the best accuracy. In return, the collateral accuracy starts to significantly decrease when output size is below 7. At size 4, it is only $76.4\%$ on average so $18\%$ less than the baseline. We will keep an output size of 3 or 4 for the next experiments to keep the main accuracy almost unchanged.

Another hyperparameter that we can consider is the weight compression: how many bits do we need to represent the weights on the private networks layers? This is of interest for the FE scheme as we need to convert all weights to integers and those integers will be low provided that the compression rate is high. Small weight integers mean that the output of the private network has a relatively low amplitude and can be therefore efficiently decrypted using discrete logarithm. We managed to express all weights and even the input image using 4 bit values with limited impact on the main accuracy and almost none on the collateral one. Details about compression can be found in Appendix \ref{app:Wcompression}.

\subsection{Adversarial training}

We propose a new approach to actively adapt against collateral learning. The main idea is to simulate adversaries and to try to defeat them. To do this, we use semi-adversarial training and optimize simultaneously the main classification objective and the opposite of the collateral objective of a given simulated adversary. The function that we want to minimize at each iteration step can be written:
\[  \min_{\theta_q} [ 
    \min_{\thpub} \lpub(\theta_q, \thpub) - \alpha \min_{\thpriv} \lpriv(\theta_q, \thpriv) 
    ] .
\]
This approach is inspired from \cite{NIPS2017_6699} where the authors train some objective against nuisances parameters to build a classifier independent from these nuisances. Private features leaking in our scheme can indeed be considered to be a nuisance. However, our approach goes one step further as we do not just stack a network above another; our global network structure is fork-like: the common basis is the private network and the two forks are the main and collateral classifiers. This allows us to have a better classifier for the main task which is not as sensitive to the adversarial training as the scheme exposed by \cite[Figure 1]{NIPS2017_6699}. One other difference is that the collateral classifier is a specific modeling of an adversary, and we will discuss this in details in the next section.
We define in Figure \ref{fig:AdvTraining} the 3-step procedure used to implement this semi-adversarial training using partial back-propagation.

\begin{figure*}
  \begin{center}
    {\footnotesize
    \begin{tabular}{|l|}\hline
        \underline{Pre-training}: \textit{Initial phase where both tasks learn and strengthen before the joint optimization}\\
        Minimize $\lpub(\theta_q, \thpub)$\\
        Minimize $\lpriv(\freeze(\theta_q), \thpriv)$\\
        \\
        \underline{Semi-adversarial training}: \textit{The joint optimization phase, where $\thpub$ and $\thpriv$ are updated depending on}\\
        \textit{the variations of $\theta_q$ and $\theta_q$ is optimized to reduce the loss $\loss = \lpub - \alpha \lpriv$}\\
        Minimize $\lpub(\freeze(\theta_q), \thpub)$\\
        Minimize $\lpriv(\freeze(\theta_q), \thpriv)$\\
        Minimize $\loss = \lpub(\theta_q, \freeze(\thpub)) - \alpha \lpriv(\theta_q, \freeze(\thpriv)) $\\
        \\
        \underline{Recover phase}: \textit{Both tasks recover from the perturbations induced by the adversarial phase, $\theta_q$ does not}\\
        \textit{change anymore}\\
        Minimize $\lpub(\freeze(\theta_q), \thpub)$\\
        Minimize $\lpriv(\freeze(\theta_q), \thpriv)$\\
        [1.5ex]\hline
    \end{tabular}
    }
    \caption{Our semi-adversarial training scheme.}\label{fig:AdvTraining}
  \end{center}
\end{figure*}

\section{Experimental Results}

\textbf{Accurate main task and poor collateral results.}
In Figures \ref{fig:outputsize_sabo_main} and \ref{fig:outputsize_sabo_coll} we show that the output size has an important influence on the two tasks' performances. For this experiment, we use $\alpha = 1.7$ as detailed in Appendix \ref{app:alpha}, the adversary uses the same CNN as stated above and the main network is a simple feed forward network (FFN) with 4 layers. We observe that both networks behave better when the output size increases, but the improvement is not synchronous which makes it possible to have a main task with high accuracy while the collateral task is still very inaccurate. In our example, this corresponds to an output size between 3 and 5. Note that the collateral result is the accuracy at the distinction task, i.e., the digit is fixed for the adversary which trains to distinguish two fonts during a 50 epoch \textit{recover phase} using 7-fold cross validation, after 50 epochs of \textit{semi-adversarial training} have been spent to reduce leakage from the private network.

\begin{figure}[t]
{\centering
\begin{minipage}{0.49\textwidth}
  \footnotesize
  \centering
\hspace{-15px}
\includegraphics[width=1\textwidth]{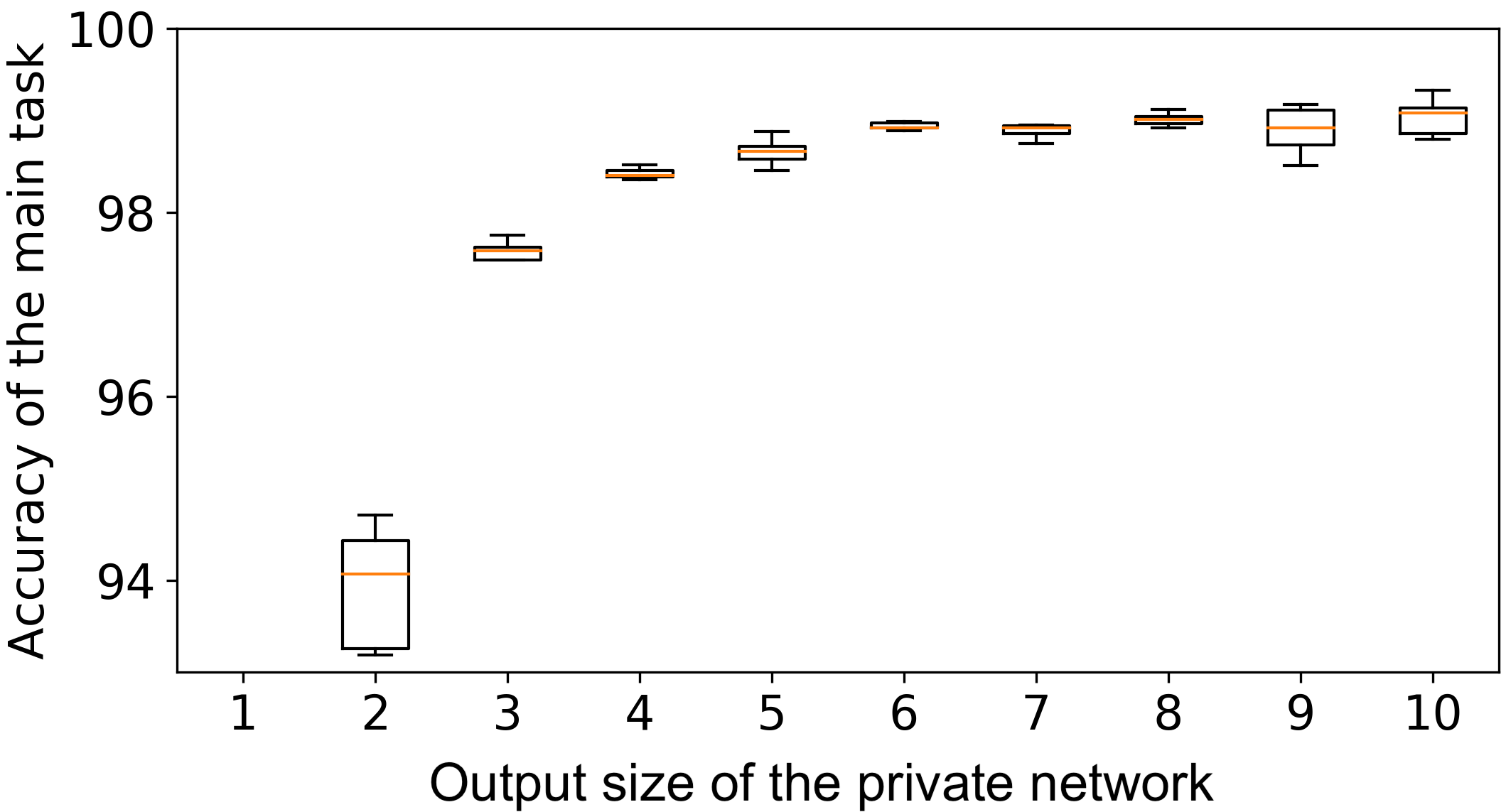}

\vspace*{-.2cm}

  \captionof{figure}{Influence of the output size on the main task accuracy with adversarial training.} 
  \label{fig:outputsize_sabo_main}
\end{minipage}
\hspace{10px}
\begin{minipage}{0.49\textwidth}
\centering
\vspace{-10px}
\includegraphics[width=1\textwidth]{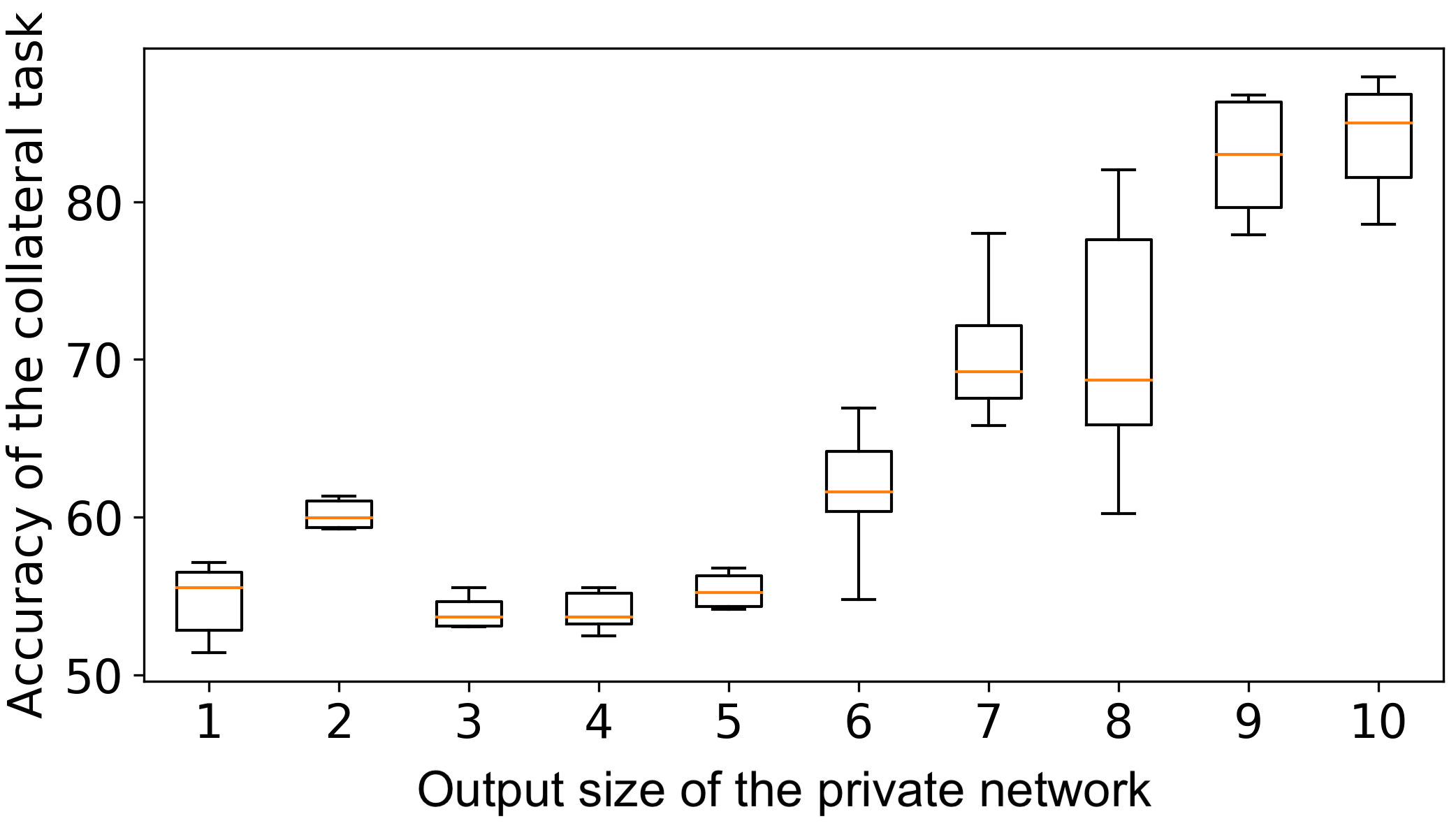}

\vspace*{-.2cm}

\captionof{figure}{Influence of the output size on the collateral task accuracy with adversarial training.}
\label{fig:outputsize_sabo_coll}
\end{minipage}
}
\vspace{-10px}
\end{figure}

\textbf{Generalizing resistance against multiple adversaries.}
In practice, it is very likely that the adversary will use a different model than the one against which the protection has been built. We have therefore investigated how building resistance against a model $M$ can provide resistance against other models. Our empirical results tend to show that models with less parameters than $M$ do not perform well. In return, models with more parameters can behave better, provided that the complexity does not get excessive for the considered task, because it would not provide any additional advantage and would just lead to learning noise. In particular, the CNN already mentioned above seems to be a sufficiently complex model to resist against a wide range of feed forward (FFN) and convolutional networks, as illustrated in Figure \ref{fig:resistanceNN} where the measure used is indistinguishability of the font for a fixed digit. This study is not exhaustive as the adversary can change the activation function (here we use $\mathsf{relu}$) or even the training parameters (optimizer, batch size, dropout, etc.), but these do not seem to provide any decisive advantage.

\begin{figure}[htb!]
{\centering
\begin{minipage}{0.49\textwidth}
  \footnotesize
  \centering
\hspace{-15px}
    \includegraphics[width=1.03\linewidth]{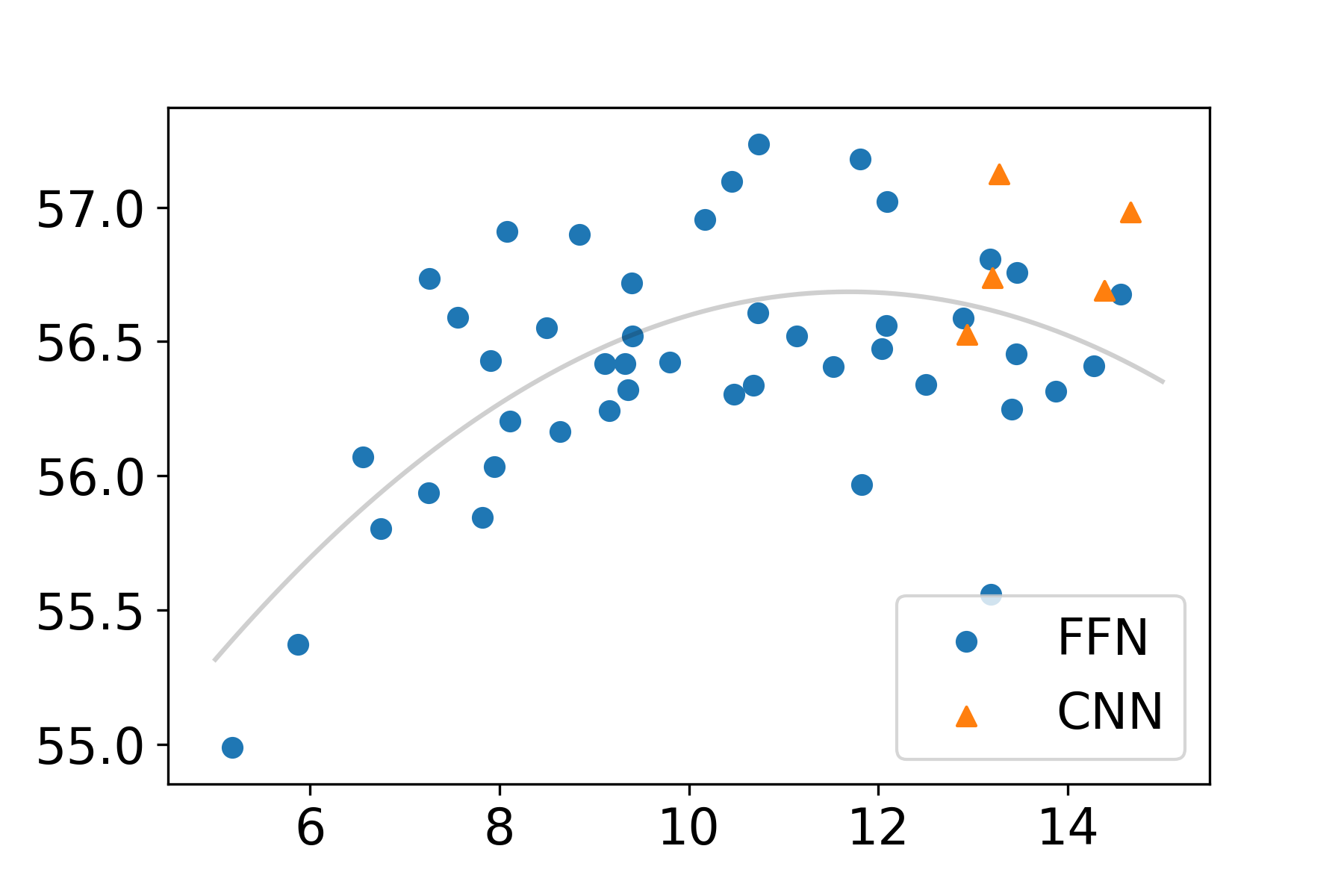}
    
    \vspace*{-.4cm}
  \caption{Collateral accuracy depending of the adversarial network complexity seen as the log of the number of parameters.}\label{fig:resistanceNN}
\end{minipage}
\hspace{10px}
\begin{minipage}{0.49\textwidth}
\centering
\vspace{-10px}
\begin{tabular}{|l|l|}
     \hline
     Linear Ridge Regression & $53.5 \pm 0.5\%$\\
     \hline
     Logistic Regression  & $52.5 \pm 0.6\%$\\
     \hline
     Quad. Discriminant Analysis  & $54.9 \pm 0.3\%$\\
     \hline
     SVM (RBF kernel)  & $57.9 \pm 0.4\%$\\
     \hline
     Gaussian Process Classifier  & $53.8 \pm 0.3\%$\\
     \hline
     Gaussian Naive Bayes  & $53.2 \pm 0.5\%$\\
     \hline
     K-Neighbors Classifier  & $58.1 \pm 0.7\%$\\
     \hline
     Decision Tree Classifier  & $56.8 \pm 0.4\%$\\
     \hline
     Random Forest Classifier  & $58.9 \pm 0.2\%$\\
     \hline
     Gradient Boosting Classifier  & $58.9 \pm 0.2\%$\\
     \hline
    \end{tabular}
    
    \vspace*{-.02cm}
    
  \caption{Accuracy on the distinction task for different adversarial learning models.}\label{tab:resistanceSklearn}
\end{minipage}
}
\vspace{-5px}
\end{figure}

We also assessed the resistance to a large range of other models from the sklearn library \cite{scikit-learn} and report the collateral accuracy in Figure \ref{tab:resistanceSklearn}. As can be observed, some models such as k-nearest neighbors or random forests perform better compared to  neural networks, even if their accuracy remains relatively low. One reason can be that they operate in a very different manner compared to the model on which the adversarial training is performed: k-nearest neighbors for example just considers distances between points.

\textbf{Runtime.}
Training in semi-adversarial mode can take quite a long time depending on the level of privacy one wants to achieve. However, the runtime during the test phase is much faster, it is dominated by the FE scheme part which can be broken down to 4 steps: functional key generation, encryption of the input, evaluation of the function and discrete logarithm. Regarding encryption and evaluation, the main overhead comes from the exponentiations and pairings which are implemented in the crypto library charm \cite{charm13}. In return, the discrete logarithm is very efficient thanks to the reduction of the weights amplitude detailed in Figure \ref{ReduceLeak}.

\begin{table}[htb!]
  \centering
    \begin{tabular}{|l|l|l|l|l|}
     \hline
	 Functional key generation & $94 \pm 5$ms & ~ & Evaluation time & $2.97 \pm 0.07$s\\
     \hline
     Encryption time & $12.1 \pm 0.3$ s & ~ & Discrete logarithms time & $24 \pm 9$ms\\
     \hline
    \end{tabular}
    
    \vspace*{.1cm}
    
  \caption{Average runtime for the FE scheme using a 2,7 GHz Intel Core i7 and 16GB of RAM. }\label{fig:runtime}
\end{table}

Table \ref{fig:runtime} shows that encryption time is longer than evaluation time, but a single encryption can be used with several decryption keys $\dk_{q_i}$ to perform multiple evaluation tasks. 


\section{Conclusion}
We have shown that functional encryption can be used for practical applications where machine learning is used on sensitive data. We have raised awareness about the potential information leakage when not all the network is encrypted and have proposed semi-adversarial training as a solution to prevent targeted sensitive features from leaking for a vast family of adversaries.

However, it remains an open problem to provide privacy-preserving methods for all features except the public ones as they can be hard to identify in advance. On the cryptography side, extension of the range of functions supported in functional encryption would help increase provable data privacy, and adding the ability to hide the function evaluated would be of interest for sensitive neural networks.

\section*{Acknowledgments}

This work was supported in part by the European Community’s Seventh Framework Programme (FP7/2007-2013 Grant Agreement no. 339563 -- CryptoCloud), the European Community’s Horizon 2020 Project FENTEC (Grant Agreement no. 780108), the Google PhD fellowship, and the French FUI ANBLIC Project.


\bibliographystyle{plain}
\bibliography{ref}

\clearpage
\appendix
\section{Functional Encryption and crypto tools}\label{appendix:FEsec}

\subsection{Formal definition of Functional Encryption}\label{appendix:FE}

Functional encryption relies on a pair of keys like in public key encryption: a master secret key $\msk$ and a public key $\pk$. The public key $\pk$ can be shared and is used to encrypt the data, while the master secret key $\msk$   is used to build functional decryption keys $\dk_{f}$ for $f \in \cF$. A user having access to $c$ an encryption of $x$ with $\pk$ and to $\dk_{f}$ can learn $f(x)$ but can't learn anything else about $x$.

We give the definition of Functional Encryption, originally defined in \cite{TCC:BonSahWat11,cryptoeprint:2010:556}.

\begin{definition}[Functional Encryption]
    A {\em functional encryption} scheme $\FE$ for a set of functions $\cF \subseteq \cX \rightarrow \cY$ is a tuple of PPT algorithms
    $\FE = (\SetUp, \allowbreak \KeyGen, \allowbreak  \Enc, \allowbreak  \Dec)$ defined as follows.
    \begin{description}
        \item[$\SetUp(1^{\secpar},\cF)$] takes as input a security parameter $1^{\secpar}$, the set of functions $\cF$, and outputs a master secret key $\msk$ and a public key $\pk$.

        \item[$\KeyGen(\msk,f)$] takes as input the master secret key and a function $f \in \cF$, and outputs a functional decryption key $\dk_{f}$.

        \item[$\Enc(\pk, x)$] takes as input the public key $\pk$ and a
        message $x \in \cX$, and outputs a ciphertext $\ct$.

        \item[$\Dec(\dk_f, \ct)$] takes as input a functional decryption key $\dk_{f}$ and a ciphertext
        $\ct$, and returns an output $y \in \cY \cup \{\bot\}$, where $\bot$ is a special rejection symbol.
    \end{description}
\end{definition}

\subsection{IND-CPA security}\label{app:ind-cpa}
With notations of Appendix \ref{appendix:FE}, for any stateful adversary $\advA$ and any functional encryption scheme $\FE$, we define the following advantage.

\[\Adv^{\FE}_{\advA}(\lambda) := \Pr\left[\beta' = \beta :\begin{array}{l}
(\pk,\msk) \gets \SetUp(1^\lambda,\cF)\\
(x_0,x_1) \gets \advA^{\KeyGen(\msk,\cdot)}(\pk)\\
\beta \getsr \zo, 
\ct \gets \Enc(\pk,x_\beta)\\
\beta' \gets \advA^{\KeyGen(\msk,\cdot)}(\ct)
\end{array}\right] - \frac{1}{2},\]

with the restriction that all queries $f$ that $\advA$ makes to key generation algorithm
$\KeyGen(\msk,\cdot)$ must satisfy $f(x_0)=f(x_1)$.

We say $\FE$ is IND-CPA secure if for all PPT adversaries $\advA$, $\Adv^{\FE}_{\advA}(\lambda) = \negl(\secpar)$\footnote{In cryptography, the security parameter $\lambda$ is a measure of the probability with which an adversary can break the scheme. $\lambda$ or $1^\lambda$ means that the probability of breaking the scheme is $2^{-\lambda}$.}.

\subsection{Bilinear Groups}
\label{sec:group}

Our FE scheme uses bilinear (or \textit{pairing}) groups, whose use in cryptography has been introduced by \cite{BonFra03,JC:Joux04}.
More precisely, given $\lambda$ a security parameter, let $\Gone$ and $\Gtwo$ be two cyclic groups of prime order $p$ (for a $2\lambda$-bit prime $p$) and $g_1$ and $g_2$ their generators, respectively.
The application $e: \Gone \times \Gtwo \rightarrow \Gt$ is a pairing if it is efficiently computable, non-degenerated, and bilinear: $e(g_1^\alpha,g_2^\beta) = e(g_1,g_2)^{\alpha \beta}$ for any $\alpha,\beta\in\Z_p$. Additionally, we define $g_T := e(g_1, g_2)$ which spans the group $\Gt$ of prime order $p$.

We will denote by $\ggen$ a probabilistic polynomial-time (PPT) algorithm that on input $1^\lambda$ returns a description $\PG=(\Gone,\Gtwo,p,g_1,g_2,e)$ of an asymmetric bilinear group. For convenience, given $s = 1$, 2 or $T$, $n \in \N$ and vectors $\vec u := ( u_1 \dots  u_n )\in \Z^n_p$, $\vec v \in \Z_p^{n}$, we denote by $g_s^{\vec u} := (g^{u_1}_s \dots g^{u_n}_s)  \in \mathbb{G}_s^{n}$ and $e(g_1^{\vec u},g_2^{\vec v}) = \prod_{i=1} e(g_1,g_2)^{u_i \cdot v_i} =  e(g_1,g_2)^{\vec u \cdot \vec v} \in \Gt$, where $\vec u \cdot \vec v$ is the inner product, i.e. $\vec u \cdot \vec v := \sum_{i=1}^n u_i v_i$.

\section{Our Quadratic Functional Encryption Scheme}\label{appendix:qfe}

\subsection{Proofs of IND-CPA security and correctness}\label{appendix:QFEproofs}

\subsubsection*{Proof of Security}

To prove security of our scheme, we use the Generic Bilinear Group Model, which captures the fact that no attacks can make use of the representation of group elements. For convenience, we use Maurer's model~\cite{IMA:Maurer05}, where a third party implements the group and gives access to the adversary via handles, providing also equality checking. This is an alternative, but equivalent, formulation of the Generic Group Model, as originally introduced in~\cite{Nechaev94,EC:Shoup97}.

We prove security in two steps: first, we use a master theorem from \cite{C:BCFG17} that relates the security in the Generic Bilinear Group model to a security in a symbolic model. Second, we prove security in the symbolic model. Let us now explain the symbolic model (the next paragraph is taken verbatim from \cite{CCS:ABGW17}).

In the symbolic model, the third party does not implement an actual group, but keeps track of
abstract expressions. For example, consider an experiment where values $x,y$ are sampled from $\Z_p$ and the adversary gets handles to $g^x$ and $g^y$. In the generic model, the third party will choose a group of order $p$, for example $(\Z_p,+)$, will sample values $x,y \leftarrow_{R} \Z_p$ and will give handles to $x$ and $y$. On the other hand, in the symbolic model the sampling  won't be performed and the third party will output handles to $X$ and $Y$, where $X$ and $Y$ are abstract variables. Now, if the adversary asks for equality of the elements associated to the two handles, the answer will be negative in the symbolic model, since abstract variable $X$ is different from abstract variable $Y$, but there is a small chance the equality check succeeds in the generic model (only when the sampling of $x$ and $y$ coincides).

To apply the master theorem, we first need to change the distribution of the security game to ensure that the public key, challenge ciphertext, and functional decryption keys only contain group elements whose exponent is a polynomial evaluated on uniformly random values in $\Z_p$ (this is called polynomially induced distributions in \cite[Definition 10]{C:BCFG17}, and previously in \cite{C:BFFMSS14}). We show that this is possible with only a negligible statistical change in the distribution of the adversary view.

After applying the master theorem from \cite{C:BCFG17}, we prove the security in the symbolic model (cf. Appendix \ref{lem:symb}), which simply consists of checking that an algebraic condition on the scheme in satisfied.

\begin{theorem}[IND-CPA Security in the Generic Bilinear Group Model]\label{thm:FEsec}
For any PPT adversary $\advA$ that performs at most $Q$ group operations against the functional encryption scheme described on \ref{fig:FE}, we have, in the generic bilinear group model:
\[ \Adv^{\FE}_{\advA}(\lambda) \leq \frac{12\cdot(6n+3+Q+Q')^2+1}{p},\] where $Q'$ is the number of queries to $\KeyGen(\msk,\cdot)$.
\end{theorem}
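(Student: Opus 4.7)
The plan is to follow the two-step blueprint hinted at in the excerpt: first pass from the Generic Bilinear Group Model to a purely symbolic game via the master theorem of Baltico et al.\ \cite{C:BCFG17}, and then check the algebraic non-degeneracy condition for our specific scheme. Before invoking the master theorem, the scheme must be rewritten so that every group element shown to the adversary has an exponent that is a fixed polynomial evaluated on uniformly random scalars in $\Z_p$ (a polynomially-induced distribution). Counting the elements handed to $\advA$ (the generators $g_1, g_2$, the $2n$ public-key components $\{g_1^{s_i}, g_2^{t_i}\}_i$, and the $4n+1$ ciphertext components $g_1^\gamma, \{g_1^{\vec a_i}, g_2^{\vec b_i}\}_i$) already yields the constant $6n+3$ appearing in the bound; the master theorem then contributes a term quadratic in the initial handles plus the number of subsequent group operations and key-generation queries, i.e.\ proportional to $(6n+3+Q+Q')^2/p$.

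The first technical step is the distributional rewrite. The only nontrivial sampling in $\Enc$ is $\matW \getsr \GL_2$, which is not polynomially-induced. I would replace this by $\matW \getsr \Z_p^{2\times 2}$, at the cost of a statistical distance of $|\Z_p^{2\times 2}\setminus \GL_2|/|\Z_p^{2\times 2}|=O(1/p)$ per ciphertext, contributing the additive $1/p$ term. Once $\matW$ ranges over all matrices and $\gamma,\vs,\vt$ are uniform in $\Z_p$, each coordinate of $g_1^{\vec a_i}, g_2^{\vec b_i}, g_1^\gamma, g_1^{\vs}, g_2^{\vt}$, and each $\dk_q=g_2^{q(\vs,\vt)}$, is visibly a polynomial in the formal variables $(\vs,\vt,\gamma,\matW)$ of bounded degree (at most $2$ in source groups, hence degree $\leq 4$ after pairings in $\Gt$). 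We are now in the hypotheses of \cite[Master Theorem]{C:BCFG17}.

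The remaining task is to verify symbolic security: for every $\advA$-queried polynomial identity in $\Gt$, if it holds formally at the challenge $\vx_\beta$, then it holds for both $\beta=0$ and $\beta=1$. Using the identity $\vec a_i\cdot \vec b_j = (x_i,\gamma s_i)\matW^{-1}\matW(y_j,-t_j)^\top = x_i y_j - \gamma s_i t_j$, the only monomials the adversary can form in $\Gt$ are $\Z_p$-linear combinations of $\{s_i\}$, $\{t_j\}$, $\{x_i y_j - \gamma s_i t_j\}$, $\gamma$, and, per key query $q$, the element $\gamma \cdot q(\vs,\vt)$. Pairing against the key $\dk_q$ together with the $\vec a_i,\vec b_j$ handles gives precisely $\gamma q(\vs,\vt)+\sum_{i,j}q_{i,j}(x_iy_j-\gamma s_it_j)=q(\vx,\vy)$; any other combination either retains a surviving $\gamma s_i t_j$ cross-term (which cannot be cancelled without the right $\dk_q$, because $\matW$ hides the $(x,y)$-parts symbolically from the $(s,t)$-parts) or collapses to something independent of $(\vx,\vy)$. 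Hence the only $\beta$-dependent polynomial identities available to $\advA$ are exactly those equivalent to evaluating a queried $q$ on the challenge, and by the game restriction $q(\vx_0,\vy_0)=q(\vx_1,\vy_1)$ these have identical value in both worlds.

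The main obstacle will be the symbolic non-degeneracy argument in the last paragraph: the scheme embeds the plaintexts inside a linear mixing by $\matW$ that is shared across all coordinates of a single ciphertext, so one cannot just treat each coordinate independently. One must carefully track that the $\matW^{-1}$ and $\matW$ cancel only when the $\vec a_i$ and $\vec b_j$ slots are paired in the specific combination prescribed by some queried $q$, and otherwise leave a residual $\matW$-dependence that is formally linearly independent from everything else the adversary can see; handling multiple challenge ciphertexts and multiple key queries simultaneously, each with its own fresh $\gamma$ and $\matW$, is the bookkeeping heart of the proof, and is where invoking \cite[Lemma/Thm.\ from Sec.\ 4]{C:BCFG17} as a black box for our linearly structured ciphertext is most valuable.
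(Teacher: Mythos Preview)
Your high-level plan matches the paper's, but the first technical step has a real gap. Replacing $\matW \getsr \GL_2$ by $\matW \getsr \Z_p^{2\times 2}$ does \emph{not} make the exponents polynomially induced: the ciphertext component $\vec a_i = (\matW^{-1})^\top\binom{x_i}{\gamma s_i}$ involves $\matW^{-1} = (\det\matW)^{-1}\,\mathrm{adj}(\matW)$, which is a \emph{rational} function of the entries of $\matW$ (and is undefined on the singular matrices you just added). So the hypothesis of the master theorem of \cite{C:BCFG17} is not met, and your claim that ``each coordinate \ldots\ is visibly a polynomial of degree at most $2$'' is false. The paper's fix is different: write $\matW=\SmallMatrix{a & b\\ c & d}$ with $a,b,c,d\getsr\Z_p$ and simultaneously replace the generator $g_1$ by $g_1^{ad-bc}$. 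This multiplies every $\Gone$-exponent by $ad-bc$, which cancels the $(\det\matW)^{-1}$ and turns $\vec a_i$ into the polynomial $\SmallMatrix{d & -c\\ -b & a}\binom{x_i}{\gamma s_i}$. The cost is the statistical distance $\Pr[ad-bc=0]\le 3/p$, and the resulting polynomials have degree up to $3$ in the source groups, hence degree $6$ (not $4$) in $\Gt$---this is the degree that feeds into the constant $12$ in the bound.

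Your symbolic-security sketch also needs more. The adversary holds handles to the \emph{individual} coordinates $g_1^{a_{i,1}},g_1^{a_{i,2}},g_2^{b_{j,1}},g_2^{b_{j,2}}$, not only to the packaged inner products $\vec a_i\cdot\vec b_j$; so in $\Gt$ it can form all cross-products $a_{i,k}\,b_{j,\ell}$, as well as things like $(AD-BC)S_i\cdot f(\vec S,\vec T)$ from pairing a public-key element with a decryption key. Your list of ``available monomials'' is therefore incomplete, and the sentence ``otherwise leaves a residual $\matW$-dependence'' is not yet an argument. The paper proves the symbolic condition by writing the generic $\Gt$-combination as $u^\top\matM v$ for explicit vectors $u,v$ of polynomials in the formal variables $A,B,C,D,\Gamma,\vec S,\vec T$, and then pinning down the relevant entries of $\matM$ one block at a time by specializing subsets of $A,B,C,D$ to $0$ (or $1$) and reading off coefficients of specific monomials; the restriction $f(\vx^{(0)},\vy^{(0)})=f(\vx^{(1)},\vy^{(1)})$ enters precisely when the surviving block of $\matM$ is identified with a combination of the queried $f$'s. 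Finally, there is only a single challenge ciphertext in this IND-CPA game, so there is no ``multiple challenge ciphertexts, each with its own fresh $\gamma$ and $\matW$'' to handle.
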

The proof of this result is quite technical and can be found in the dedicated Appendix \ref{app:FEproof}.

\subsubsection*{Proof of Correctness}

For all $i,j\in[n]$, we have:
$$e(g_1^{\vec{a}_i},g_2^{\vec{b}_j}) = g_T^{{\vec{a}_i} \cdot \vec{b}_j} = g_T^{x_i y_j - \gamma s_i t_j}$$
since
\begin{align*}
\vec{a}_i \cdot \vec{b}_j
& = \left( (\matW^{-1})^\top \begin{pmatrix}x_i \\ \gamma s_i\end{pmatrix} \right)^\top
  \cdot \left(\matW \begin{pmatrix}y_j \\ -t_j\end{pmatrix}\right) \\
& = \begin{pmatrix} x_i \\ \gamma s_i \end{pmatrix}^\top \matW^{-1} \matW \begin{pmatrix}y_j \\ -t_j\end{pmatrix}
= {x_i y_j - \gamma s_i t_j}.
\end{align*}
Therefore we have:
\begin{align*}
out &= e(g_1^\gamma,g_2^{q(\vs,\vt)}) \cdot \prod_{i,j} e(g_1^{\vec{a}_i},g_2^{\vec{b}_i})^{q_{i,j}}
 = g_T^{\gamma q(\vs,\vt)} \cdot g_T^{\sum_{i,j} q_{i,j} x_i y_j-\gamma q_{i,j} s_i t_j} \\
& = g_T^{\gamma q(\vs,\vt)} \cdot g_T^{q(\vx,\vy)-\gamma q(\vs,\vt)} = g_T^{q(\vx,\vy)}.
\end{align*}

\subsubsection*{Proof of Complexity}

The complexity can be inferred from the decryption phase as detailed in Figure \ref{fig:FE} and we compare this with previous quadratic FE schemes in Figure  \ref{fig:compare}.

\begin{figure*}[htb!]
    \normalsize
    \centering
    \begin{tabular}{@{}l|c|c|c|c}
        FE scheme & $ct$ &  $\dk_f$ & $\Dec$ & Assumption \\\hline
        \rule{0pt}{1.0\normalbaselineskip}\cite[Sec. 3]{C:BCFG17} & $\Gone^{6n+1} \times \Gtwo^{6n+1}$ &  $\Gone \times \Gtwo$ & $6n^2(E_1+P)+2P$ & SXDH, 3PDDH\\
        \cite[Sec. 4]{C:BCFG17} &  $\Gone^{2n+1} \times \Gtwo^{2n+1}$ &  $\Gone^2$ & $3n^2(E_1+P)+2P$ & GGM \\
        Ours &  $\Gone^{2n+1} \times \Gtwo^{2n}$  &  $\Gtwo$ & $2n^2(E_1+P)+P$  & GGM
    \end{tabular}
    \caption{Performance comparison of FE for quadratic polynomials. $E_1$ and $P$ denote exponentiation in $\Gone$ and pairing evaluation, respectively. Decryption additionally requires solving a discrete logarithm but this computational overhead is the same for all schemes and is therefore omitted here.} \label{fig:compare}
\end{figure*}

\subsection{Detailed equivalence of the FE scheme with a neural network}

\subsubsection*{Proof of Linear Homomorphism}\label{app:QFEequiv}

For all $(\vec x, \vec y) \in \Z_p^n\times\Z_p^n$, and $(\vec u, \vec v) \in \Z_p^n\times\Z_p^n$, given an encryption of $(\vec x, \vec y)$ under the public key $\pk := (g_1^{\vec s}, g_2^{\vec t})$, one can efficiently compute an encryption of $(\vec u^\top \vec x, \vec v^\top \vec y)$ under the public key $\pk' := (g_1^{\vec u^\top \vec s}, g_2^{\vec v^\top \vec t})$. Indeed, given \[\Enc(\pk,(\vec x, \vec y)) := (g_1^\gamma, \{g_1^{\vec a_i}, g_2^{\vec b_i}\}_{i \in [n]}),\]
and $\vec u, \vec v \in \Z_p^n$, one can efficiently compute:  \[(g_1^\gamma, g_1^{\sum_{i \in [n]} u_i \cdot \vec a_i}, g_2^{\sum_{i \in [n]} v_i \cdot \vec b_i}),\]
which is $\Enc(\pk',(\vec u^\top \vec x, \vec v^\top \vec y))$, since:
\begin{align*}
\sum_{i\in [n]} u_i \cdot \vec a_i & = \sum_{i \in [n]} u_i \cdot (\matW^{-1})^\top \begin{pmatrix}x_i \\ \gamma s_i\end{pmatrix}
= (\matW^{-1})^\top \begin{pmatrix} \sum_{i \in [n]} u_i \cdot x_i \\ \gamma \sum_{i \in [n]} u_i \cdot s_i\end{pmatrix} \\
& = (\matW^{-1})^\top \begin{pmatrix} \vec u^\top \vec x \\ \gamma \vec u^\top \vec s\end{pmatrix}.
\end{align*}
Similarly, we have:
\begin{align*}
\sum_{i\in [n]} v_i \cdot \vec b_i = \sum_{i \in [n]} v_i \cdot \matW \begin{pmatrix} y_i \\ -t_i\end{pmatrix} =
  \matW \begin{pmatrix} \vec v^\top \vec y \\ -\vec v^\top \vec t\end{pmatrix}.
\end{align*}

\section{Additional results}

\subsection{Influence of weight compression on the network performance}\label{app:Wcompression}

We show here that we can manage to compress significantly the network weights in order to have a very fast discrete logarithm without modifying the results and conclusions made throughout the article. The main and collateral model follow the same CNN structure as stated above, and the collateral accuracy is reported after 10 epochs of training.

\begin{table}[htb!]
  \centering
    \begin{tabular}{|l|l|}
     \hline
     Main accuracy  with compression & $97.72 \pm  0.30 ~ \%$\\
     \hline
     Collateral accuracy with compression & $55.27 \pm  0.41 ~ \%$\\
     \hline
    \end{tabular}
    \vspace{10px}
  \caption{Impact of weight compression on the main and collateral accuracies}\label{tab:Wcompression}
\end{table}

\subsection{Influence of alpha during adversarial training}\label{app:alpha}

To choose the best value for $\alpha$, we have chosen an output size of 4 which allows us to keep a very high main accuracy while reducing significantly the collateral one, as shown in Figure \ref{fig:pareto_nosabo}. We observe that the semi-adversarial training does not affect much the main accuracy for a large range of values for $\alpha$, while its impact on the collateral accuracy is decisive. Figure \ref{fig:alpha} illustrates the role of $\alpha$ and justify our choice of $\alpha = 1.7$. For this experiment, we have chosen for both networks a simple feed forward with a hidden layer of 32 neurons.

\begin{figure}[htb!]
  \centering
  \includegraphics[width=0.70\linewidth]{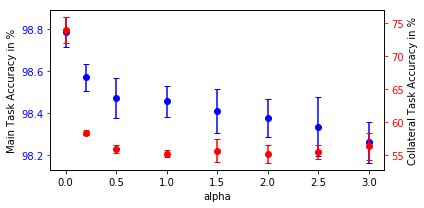}
  \caption{Trade-off between the main and collateral tasks accuracies as a function of $\alpha$}\label{fig:alpha}
\end{figure}

\section{Security proof of our FE scheme}\label{app:FEproof}

\begin{proof} ~For any experiment $\Exp$, adversary $\advA$, and security parameter $\lambda \in \N$, we use the notation: $\adv_{\Exp}(\advA) := \Pr[1 \gets \Exp(1^\lambda,\advA)]$, where the probability is taken over the random coins of $\Exp$ and $\advA$.

\begin{figure*}
  \begin{center}
    \begin{tabular}{|l|l|}\hline
        \underline{$\Exp_1(1^\lambda,\advA)$:} & \underline{$\KeyGen(\msk,f)$:}\\
        $(\Gone,\Gtwo,p,g_1,g_2,e) \gets \ggen(1^\lambda)$, $\vs,\vt \getsr \Z^n_p$ & return $(g_2^{f(\vs,\vt)},f)$.\\
        $a,b,c,d \getsr \Z_p$, set $\PG:= (\Gone,\Gtwo,p,g_1^{ad-bc},g_2,e)$ & \\
        $\msk := (\vs,\vt)$, $\pk := \left(\PG,g_1^{(ad-bc)\vs}, g_2^{\vt}\right)$ & \\
        $\Big((\vec x^{(0)},\vec y^{(0)}), (\vec x^{(1)}, \vec y^{(1)})\Big) \gets \advA^{\KeyGen(\msk,\cdot)}(\pk)$ &\\
        $\beta \getsr \zo$, $\gamma \getsr \Z_p$ &\\
        for all $i \in [n]$, $\vec{a}_i := \begin{pmatrix}d & -c \\ -b & a \end{pmatrix} \begin{pmatrix}x^{(\beta)}_i \\ \gamma s_i\end{pmatrix}$, $\vec{b}_i := \begin{pmatrix}a & b \\ c & d\end{pmatrix} \begin{pmatrix} y^{(\beta)}_i \\ -t_j\end{pmatrix}$ &\\
        $ct =: \big(g_1^{\gamma(ad-bc)}, \{g_1^{\vec{a}_i},g_2^{\vec{b}_i}\}_{i \in [n]}\big)$ &\\
        $\beta' \gets \advA^{\KeyGen(\msk,\cdot)}(\pk,ct)$ &\\
        Return $1$ if $\beta' = \beta$ and for all queried $f$, $f(\vec x^{(0)},\vec y^{(0)}) = f(\vec x^{(1)}, \vec y^{(1)})$.&\\\hline
        \end{tabular}
    \caption{Experiment $\Exp_1$, for the proof of Theorem \ref{thm:FEsec}.}\label{fig:G1}
  \end{center}
\end{figure*}

While we want to prove the security result in the real experiment $\Exp_0$, in which the adversary has to guess $\beta$, we slightly modify it into the hybrid experiment $\Exp_1$, described in \ref{fig:G1}:
we write the matrix $\matW \getsr \GL_2$ used in the challenge ciphertext as $\matW := \begin{pmatrix} a & b \\ c & d \end{pmatrix}$, chosen from the beginning. Then $\matW^{-1} = \frac{1}{ad-bc} \begin{pmatrix} d & -b \\ -c & a \end{pmatrix}$.

The only difference with the IND-CPA security game as defined in Appendix \ref{app:ind-cpa}, is that we change the generator $g_1 \getsr \Gone^*$ into $g_1^{ad-bc}$ for $a,b,c,d \getsr \Z_p$, which only changes the distribution of the game by a statistical distance of at most $\frac{3}{p}$ (this is obtained by computing the probability that $ad-bc=0$ when $a,b,c,d \getsr \Z_p$).
Thus, \[\Adv^{\FE}_{\advA}(\lambda) = \adv_0(\advA) \leq \adv_1(\advA) + \frac{3}{p}.\]

Note that in $\Exp_1$, the public key, the challenge ciphertext and the functional decryption keys only contain group elements whose exponents are polynomials evaluated on random inputs (as opposed to $g_1^{\matW^{-1}}$, for instance). This is going to be helpful for the next step of the proof, which uses the generic bilinear group model.

Next, we make the generic bilinear group model assumption, which intuitively says that no PPT adversary can exploit the structure of the bilinear group to perform better attacks than generic adversaries. That is, we have (with $\Exp_2$  defined in \ref{fig:GGM}):
\[\max_{\mbox{\begin{scriptsize}PPT \end{scriptsize}} \advA}\Big(\adv_1(\advA)\Big) = \max_{\mbox{\begin{scriptsize}PPT \end{scriptsize}}\advA}\Big(\adv_2(\advA)\Big).\]

\begin{figure*}
    \begin{center}\fbox{
            \begin{minipage}[t]{0.95\textwidth}
                \underline{$\Exp_2(1^\lambda,\advA)$:}\\[0.2ex]
                $L_1=L_2= L_T:= \emptyset$, $Q_\sk := \emptyset$, $\vs,\vt \getsr \Z_p^n$, $a,b,c,d \getsr \Z_p$, $\append(L_1,(ad-bc) \cdot \vec s)$, $\append(L_2,\vec t)$, $\beta \getsr \zo$\\
                $\Big((\vec x^{(0)},\vec y^{(0)}),(\vec x^{(1)},\vec y^{(1)}) \Big) \gets \advA^{\cO_{\add},\cO_{\pair},\cO_{\sk},\cO_{\eq}}(1^\lambda,p)$\\
                $\cO_{\chal}\Big((\vec x^{(0)},\vec y^{(0)}),(\vec x^{(1)},\vec y^{(1)}) \Big)$\\
                $\beta'\gets \advA^{\cO_{\add},\cO_{\pair},\cO_{\sk},\cO_{\eq}}(1^\lambda,p)$\\
                If $\beta=\beta'$, and for all $f \in Q_\sk$, $f(\vec x^{(0)},\vec y^{(0)})=f(\vec x^{(1)},\vec y^{(1)})$, output $1$. Otherwise, output $0$.\\
                \\
                \underline{$\cO_{\add}(s\in \{1,2,T\},i,j \in \N)$:}\\[0.2ex]
                $\append(L_s,L_s[i]+ L_s[j])$.
                \\
                \\
                \underline{$\cO_{\pair}(i,j\in \N)$:}\\[0.2ex]
                $\append(L_T,L_1[i]\cdot L_2[j])$.\\
                \\
                \underline{$\cO_{\chal}\Big((\vec x^{(0)},\vec y^{(0)}),(\vec x^{(1)},\vec y^{(1)}) \Big)$:}\\[0.2ex]
                $\gamma \getsr \Z_p$, $\append(L_1,\gamma(ad-bc))$\\
                for all $i \in [n]$, $\vec{a}_i := \begin{pmatrix} d & -c \\ -b & a\end{pmatrix} \begin{pmatrix} x^{(\beta)}_i \\ \gamma s_i \end{pmatrix}$, $\append(L_1,\vec{a}_i)$,
                $\vec{b}_i := \begin{pmatrix} a & b \\ c & d \end{pmatrix} \begin{pmatrix} y^{(\beta)}_i \\ - t_i \end{pmatrix}$, $\append(L_2,\vec{b}_i)$.\\
                \\
                \underline{$\cO_{\sk}(f \in \cF_{n,B_x,B_y,B_f})$:}\\[0.2ex]
                $\append(L_2,f(\vec s, \vec t))$, $Q_\sk:= Q_\sk \cup \{f\}$.\\
                \\
                \underline{$\cO_{\eq}(s \in \{1,2,T\},i,j \in \N)$:}\\[0.2ex]
                Output $1$ if $L_s[i] = L_s[j]$, $0$ otherwise
        \end{minipage}}
    \end{center}
    \caption{Experiment $\Exp_2$. Wlog. we assume no query contains indices $i,j\in \N$ that exceed the size of the involved lists.}\label{fig:GGM}
\end{figure*}

In this experiment, we denote by $\emptyset$ the empty list, by $\append(L,x)$ the addition of an element $x$ to the list $L$, and for any $i \in \N$, we denote by $L[i]$ the $i$'th element of the list $L$ if it exists (lists are indexed from index $1$ on), or $\bot$ otherwise.

Thus, it suffices to show that for any PPT adversary $\advA$, $\adv_2(\advA)$ is negligible in $\lambda$.
The experiment $\Exp_2$ defined in Figure \ref{fig:GGM} falls into the general class of simple interactive decisional problems from \cite[Definition 14]{C:BCFG17}. Thus, we can use their master theorem \cite[Theorem 7]{C:BCFG17}, which, for our particular case (setting the public key size $N:=2n+2$, the key size $c=1$, the ciphertext size $c^*:=4n+1$, and degree $d=6$ in \cite[Theorem 7]{C:BCFG17}) states that:
     \[ \adv_2(\advA) \leq \frac{12\cdot(6n+3+Q+Q')^2}{p},\] where $Q'$ is the number of queries to $\cO_{\sk}$, and $Q$ is the number of group operations, that is, the number of calls to oracles $\cO_{\add}$ and $\cO_{\pair}$, provided the following algebraic condition is satisfied:
     \begin{align*}& \{\matM \in \Z_p^{(3n+2) \times (3n+Q'+1)}: \mathsf{Eq}_0(\matM)\} \\
       & = \{\matM \in \Z_p^{(3n+2) \times (3n+Q'+1)}: \mathsf{Eq}_1(\matM) \},\end{align*}
     where for all $\matM$, $b \in \zo$,
     \[\mathsf{Eq}_b(\matM): \begin{pmatrix} 1 \\ (AD-BC) \vec S \\ (AD-BC)\Gamma \\ D \vec x^{(b)} - \Gamma C \vec S \\ -B \vec x^{(b)} + \Gamma A \vec S\end{pmatrix}^\top \matM \begin{pmatrix} 1 \\ \vec T \\ A \vec y^{(b)} - B \vec T \\ C \vec y^{(b)} - D \vec T \\ (f(\vec S,\vec T))_{ f \in  Q_\sk}\end{pmatrix} = 0,\]
     where the equality is taken in the ring $\Z_p[\vec S, \vec T,A,B,C,D,\Gamma]$, and $0$ denotes the zero polynomial. Intuitively, this condition captures the security at a symbolic level: it holds for schemes that are not trivially broken.
     The latter means that computing a linear combination in the exponents of target group elements that can be obtained from $\pk$, the challenge ciphertext, and functional decryption keys, does not break the security of the scheme.
     We prove this condition is satisfied in \ref{lem:symb} below.
\end{proof}

\begin{lemma}[Symbolic Security]\label{lem:symb}
    For any $(\vec x^{(0)},\vec y^{(0)}), (\vec x^{(1)},\vec y^{(1)}) \in Z_p^{2n}$, and any set $Q_\sk \subseteq \cF_{n,B_x,B_y,B_f}$ such that for all $f \in Q_\sk$, $f(\vec x^{(0)}, \vec y^{(0)}) = f(\vec x^{(1)}, \vec y^{(1)})$, we have:
    \begin{align*}& \{\matM \in \Z_p^{(3n+2) \times (3n+Q'+1)}: \mathsf{Eq}_0(\matM)\} \\ & = \{\matM \in \Z_p^{(3n+2) \times (3n+Q'+1)}: \mathsf{Eq}_1(\matM) \},\end{align*}
    where for all $\matM$, $b \in \zo$,
    \[\mathsf{Eq}_b(\matM): \begin{pmatrix} 1 \\ (AD-BC) \vec S \\ (AD-BC)\Gamma \\ D \vec x^{(b)} - \Gamma C \vec S \\ -B \vec x^{(b)} + \Gamma A \vec S\end{pmatrix}^\top \matM \begin{pmatrix} 1 \\ \vec T \\ A \vec y^{(b)} - B \vec T \\ C \vec y^{(b)} - D \vec T \\ (f(\vec S, \vec T))_{ f \in  Q_\sk}\end{pmatrix} = 0,\]
    where the equality is taken in the ring $\Z_p[\vec S, \vec T,A,B,C,D,\Gamma]$, and $0$ denotes the zero polynomial.
\end{lemma}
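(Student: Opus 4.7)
By the symmetry between $b=0$ and $b=1$ (the hypothesis $f_k(\vec x^{(0)},\vec y^{(0)}) = f_k(\vec x^{(1)},\vec y^{(1)})$ is symmetric and the two sets are defined by interchanging the tuples), it suffices to prove $\mathsf{Eq}_0(\matM) \Rightarrow \mathsf{Eq}_1(\matM)$. Writing $P_b := L_b^\top \matM R_b$ for the column vectors $L_b$ and $R_b$ appearing in $\mathsf{Eq}_b$, the condition $P_b = 0$ in $\Z_p[\vec S,\vec T,A,B,C,D,\Gamma]$ is equivalent to a system of linear equations on the entries of $\matM$, one per monomial in the formal variables, with coefficients in $\Z_p$ that may depend on the scalars $\vec x^{(b)},\vec y^{(b)}$.

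I would decompose $\matM$ into a $5\times 5$ grid of submatrices $\matM^{(\ell,r)}$ whose shapes match the five natural blocks of $L_b$ and $R_b$, and enumerate the monomial contributions block by block. This partitions the coefficient equations into two types: \emph{$b$-invariant} ones, whose coefficients do not involve $\vec x^{(b)}$ or $\vec y^{(b)}$, and \emph{$b$-dependent} ones. The bulk of the argument is to show that the $b$-invariant equations already determine the allowed matrices $\matM$ up to an ambiguity that the $b$-dependent equations close off only through the hypothesis on $Q_\sk$.

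Concretely, monomials whose coefficient arises from a single block pair with no scalar factor (e.g.\ $C\Gamma S_i$, $DT_j$, $A\Gamma S_iT_j$, $AB\Gamma S_iT_j$, $CD\Gamma S_iT_j$, \ldots) immediately force many submatrices, such as $\matM^{(3,0)}, \matM^{(0,3)}, \matM^{(4,1)}, \matM^{(4,2)}, \matM^{(3,3)}$, to vanish. A second layer of $b$-invariant equations comes from the ``mixed'' monomials $BC\Gamma S_iT_j$ and $AD\Gamma S_iT_j$, which receive contributions both from the middle blocks $(\ell\in\{3,4\}, r\in\{2,3\})$ and from the block $(\ell=2, r=4)$ carrying the $f_k(\vec S,\vec T)$'s: these tie the surviving middle blocks together as $\matM^{(3,2)}_{i,j} = \matM^{(4,3)}_{i,j} = \sum_k \matM^{(2,4)}_{1,k}\, q^{(k)}_{i,j}$, so the bilinear forms underlying these blocks lie in the $\Z_p$-span of the queried-function matrices $Q^{(k)} = (q^{(k)}_{i,j})_{i,j}$. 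The correctness identity $\vec a_i\cdot\vec b_j = (AD-BC)(x^{(b)}_iy^{(b)}_j - \Gamma S_iT_j)$ computed in the correctness proof makes the provenance of these identities transparent and guides the enumeration.

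After substituting these $b$-invariant conclusions, each remaining $b$-dependent coefficient equation either becomes trivially $0=0$ (because the relevant block has already been nullified) or collapses to an expression of the form $\sum_k \matM^{(2,4)}_{1,k}\, f_k(\vec x^{(b)},\vec y^{(b)}) = 0$. For instance, the coefficient of $AD$ reduces to $\sum_{i,j}\matM^{(3,2)}_{i,j} x^{(b)}_iy^{(b)}_j = \sum_k \matM^{(2,4)}_{1,k} f_k(\vec x^{(b)},\vec y^{(b)})$, which by the hypothesis $f_k(\vec x^{(0)},\vec y^{(0)}) = f_k(\vec x^{(1)},\vec y^{(1)})$ is $b$-invariant and hence vanishes for $b=1$ whenever it vanishes for $b=0$. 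The main obstacle is the combinatorial bookkeeping: there are $25$ block pairs, each producing several monomials (some singly indexed by $i$ or $j$, some doubly indexed by additional $u,v$ coming from the $r=4$ block), which must be enumerated without conflation of $S_iT_j$ with $S_iS_uT_j$-type monomials and without missed contributions. Once organized along the lines of the compass identity above, the case analysis is tedious but mechanical.
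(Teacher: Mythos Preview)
Your plan mirrors the paper's proof: both decompose $\matM$ into the $5\times5$ block grid, extract linear constraints on the blocks by reading off monomial coefficients, and aim to show that the only surviving $b$-dependence is channeled through $\sum_k c_k f_k(\vec x^{(b)},\vec y^{(b)})$ for constants $c_k$ coming from the row $(AD-BC)\Gamma$. The paper packages the monomial extraction via variable specialization (setting subsets of $A,B,C,D$ to $0$ and then applying its ``Rule~1''), whereas you propose direct enumeration; these are interchangeable bookkeeping devices.

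There is, however, a genuine gap in your claim that ``each remaining $b$-dependent coefficient equation either becomes trivially $0=0$ \ldots\ or collapses to $\sum_k \matM^{(2,4)}_{1,k}\,f_k(\vec x^{(b)},\vec y^{(b)})=0$''---and the paper's argument shares it. Consider the monomial $D S_u T_v$: its coefficient is $\sum_{i,k} x_i^{(b)}\,\matM^{(3,4)}_{i,k}\,q^{(k)}_{u,v}$, arising solely from (row block~$3$)$\times$(column block~$4$). The only $b$-invariant constraints touching $\matM^{(3,4)}$ come from the $\Gamma C S_i S_u T_v$ monomials, and these give only the \emph{symmetrized} relations $\sum_k \matM^{(3,4)}_{i,k} q^{(k)}_{u,v} + \sum_k \matM^{(3,4)}_{u,k} q^{(k)}_{i,v}=0$; they do not kill $\matM^{(3,4)}$ nor force the $DS_uT_v$ coefficient to be $b$-invariant. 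Concretely, take $n=3$, $Q_\sk=\{f_1,f_2,f_3\}$ with $f_1(\vec x,\vec y)=x_2y_1$, $f_2=x_3y_1$, $f_3=x_2y_2$, and choose $\vec x^{(0)}=(1,0,0)$, $\vec x^{(1)}=(1,1,0)$, $\vec y^{(1)}_1=\vec y^{(1)}_2=0$, $\vec y^{(0)}$ arbitrary (so the hypothesis $f_k(\vec x^{(0)},\vec y^{(0)})=f_k(\vec x^{(1)},\vec y^{(1)})=0$ holds). Let $\matM$ have $\matM^{(3,4)}_{2,2}=1$, $\matM^{(3,4)}_{3,1}=-1$, and all other entries zero. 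Then
\[
L_b^\top \matM R_b
=(Dx_2^{(b)}-\Gamma C S_2)\,S_3T_1-(Dx_3^{(b)}-\Gamma C S_3)\,S_2T_1
=D T_1\bigl(x_2^{(b)}S_3-x_3^{(b)}S_2\bigr),
\]
which is $0$ for $b=0$ but equals $DS_3T_1\neq0$ for $b=1$. Thus $\mathsf{Eq}_0(\matM)$ holds while $\mathsf{Eq}_1(\matM)$ fails, so your enumeration as outlined cannot close. The paper's proof stumbles at the same spot: its derivation of~\eqref{eqn1} asserts $\matM_{n+2+i}\cdot(\,0,\vec T,0,0,(f(\vec S,\vec T))_f\,)^\top=0$ for each $i$, but for the column-$4$ part this per-$i$ vanishing does not follow from the $C\Gamma S_iT_j$ analysis (and is false for the $\matM$ above). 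You should flag that the block $(3,4)$ (and symmetrically $(4,4)$) resists the ``$b$-invariant then $b$-dependent'' split, rather than assume it away.
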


\begin{proof}Let $b \in \zo$, and $\matM \in \Z_p^{(3n+2) \times (3n+Q'+1)}$ that satisfies $\mathsf{Eq}_b(\matM)$. We prove it also satisfies $\mathsf{Eq}_{1-b}(\matM)$. To do so, we use the following rules:
    \begin{description}
        \item[Rule 1]: for all $P,Q,R \in \Z_p[\vec S, \vec T,A,B,C,D,\Gamma]$, with $\deg(P) \geq 1$, if $P \cdot Q+R = 0$ and $R$ is not a multiple of $P$, then $Q=0$ and $R=0$.
        \item[Rule 2]: for all $P \in \Z_p[\vec S, \vec T,A,B,C,D,\Gamma]$, any variable $X$ among the set $\{\vec S, \vec T,A,B,C,D,\Gamma\}$, and any $x \in \Z_p$, $P=0$ implies $P(X:=x)=0$, where $P(X:=x)$ denotes the polynomial $P$ evaluated on $X=x$.
    \end{description}
    Evaluating $\mathsf{Eq}_b(\matM)$ on $B=D=0$ (using \textbf{Rule 2}), then using \textbf{Rule 1} on $P=C \Gamma S_i T_j$ for all $i,j \in [n]$, we obtain that:
    \[\matM_{n+2+i} \begin{pmatrix} 0 \\ \vec T \\ \zero \\ \zero \\ (f(\vec S, \vec T))_{ f \in  Q_\sk}\end{pmatrix} = 0,\]
    where $\matM_{n+2+i}$ denotes the $n+2+i$'th row of $\matM$.

    Similarly, using \textbf{Rule 1} on $P=\Gamma A S_i T_j$ for all $i,j \in [n]$, we obtain that:
    \[\matM_{2n+2+i} \begin{pmatrix} 0 \\ \vec T \\ \zero \\ \zero \\ (f(\vec S, \vec T))_{ f \in  Q_\sk}\end{pmatrix} = 0.\]

    Thus, we have:
    \begin{equation}\label{eqn1}
        \forall \beta \in \zo: \begin{pmatrix} 0 \\ \zero \\ 0 \\ D \vec x^{(\beta)} - \Gamma C \vec S \\ -B \vec x^{(\beta)} + \Gamma A \vec S\end{pmatrix}^\top \matM \begin{pmatrix} 0 \\ \vec T \\ \zero \\ \zero \\ (f(\vec S, \vec T))_{ f \in  Q_\sk}\end{pmatrix} = 0.
        \end{equation}

    Using \textbf{Rule 1} on $P=(AD-BC) S_i B T_j$ for all $i,j \in [n]$ in the equation $\mathsf{Eq}_b(\matM)$, we get that the coefficient $M_{i+1,n+1+j}=0$ for all $i ,j \in [n]$. Similarly, using \textbf{Rule 1} on $P=(AD-BC) S_i D T_j$ for all $i,j \in [n]$, we get $M_{i+1,2n+1+j}=0$ for all $i ,j \in [n]$. Then, using \textbf{Rule 1} on $P=(AD-BC) \Gamma B T_j$ for all $j \in [n]$, we get $M_{n+2,n+1+j}=0$ for all $j \in [n]$. Finally, using \textbf{Rule 1} on $P=(AD-BC) \Gamma D T_j$ for all $j \in [n]$, we get $M_{n+2,2n+1+j}=0$ for all $j \in [n]$. Overall, we obtain:
    \begin{equation}\label{eqn2}
    \forall \beta \in \zo: \begin{pmatrix} 0 \\ (AD-BC) \vec S \\ (AD-BC)\Gamma \\ \zero \\ \zero \end{pmatrix}^\top \matM \begin{pmatrix} 0 \\ \zero \\ A \vec y^{(\beta)} - B \vec T \\ C \vec y^{(\beta)} - D \vec T \\ \zero \end{pmatrix} = 0.
    \end{equation}

    We write:
    \begin{align*}
        &\begin{pmatrix} 0 \\ \zero \\ 0 \\ D \vec x^{(b)} - \Gamma C \vec S \\ -B \vec x^{(b)} + \Gamma A \vec S\end{pmatrix}^\top \matM \begin{pmatrix} 0 \\ \zero \\ A \vec y^{(b)} - B \vec T \\ C \vec y^{(b)} - D \vec T \\ \zero \end{pmatrix} \\
        & = \sum_{i,j \in [n]} \begin{pmatrix} D x_i^{(b)} - \Gamma C S_i \\ -B x_i^{(b)} + \Gamma A S_i\end{pmatrix}^\top \\
        & \times \left( m_{i,j}^{(1)} \begin{pmatrix} 1 & 0 \\ 0 & 1 \end{pmatrix} + m_{i,j}^{(2)} \begin{pmatrix} 1 & 0 \\ 0 & 0 \end{pmatrix} + m_{i,j}^{(3)} \begin{pmatrix} 0 & 0 \\ 1 & 0 \end{pmatrix} + m_{i,j}^{(4)} \begin{pmatrix} 0 & 1 \\ 0 & 0 \end{pmatrix}\right) \\
        & \times \begin{pmatrix} A y_j^{(b)} - B T_j \\ C y_j^{(b)} - D T_j\end{pmatrix}
        \end{align*}

    Evaluating the equation $\mathsf{Eq}_b(\matM)$ on $C=D=0$ (by \textbf{Rule 2}), then using \textbf{Rule 1} on $P=\Gamma A B S_i T_j$ for all $i,j \in [n]$, we obtain $m_{i,j}^{(3)}=0$ for all $i,j \in [n]$.
    Evaluating the equation $\mathsf{Eq}_b(\matM)$ on $A=B=0$ (by \textbf{Rule 2}), then using \textbf{Rule 1} on $P=\Gamma C D S_i T_j$ for all $i,j \in [n]$, we obtain $m_{i,j}^{(4)}=0$ for all $i,j \in [n]$.
    Evaluating the equation $\mathsf{Eq}_b(\matM)$ on $A = B = C = D =1$ (using \textbf{Rule 2}), then using \textbf{Rule 1} on $P=\Gamma S_i T_j$ for all $i,j \in [n]$, using the fact that $m_{i,j}^{(3)}=m_{i,j}^{(4)}=0$ and \eqref{eqn1}, we obtain $m_{i,j}^{(2)}=0$ for all $i,j \in [n]$.
    Using \textbf{Rule 1} on $P=\Gamma (AD-BC)S_i T_j$ for all $i,j \in [n]$ in the equation $\mathsf{Eq}_b(\matM)$, we obtain that for all $i,j \in [n]$,
    $$m_{i,j}^{(1)} = \matM_{n+2} \begin{pmatrix} 0 \\ \zero \\ \zero \\ \zero \\ (f_{i,j})_{f \in Q_\sk}\end{pmatrix},$$
    where $\matM_{n+2}$ is the $n+2$'th row of $\matM$.
    
    Putting everything together, can write
    \begin{align*}
      \begin{pmatrix} 0 \\ \zero \\ 0 \\ D \vec x^{(b)} - \Gamma C \vec S \\ -B \vec x^{(b)} + \Gamma A \vec S\end{pmatrix}^\top \matM \begin{pmatrix} 0 \\ \zero \\ A \vec y^{(b)} - B \vec T \\ C \vec y^{(b)} - D \vec T \\ \zero \end{pmatrix}
    \end{align*}
    as
    \begin{align*}
       & (AD-BC) \matM_{n+2} \begin{pmatrix} 0 \\ \zero \\ \zero \\ \zero \\ \big(f(\vec x^{(b)},\vec y^{(b)}) - \Gamma f(\vec s,\vec t)\big)_{f \in Q_\sk}\end{pmatrix} \\
       & = (AD-BC) \matM_{n+2} \begin{pmatrix} 0 \\ \zero \\ \zero \\ \zero \\ \big(f(\vec x^{(1-b)},\vec y^{(1-b)}) - \Gamma f(\vec s,\vec t)\big)_{f \in Q_\sk}\end{pmatrix} \\
       & = \begin{pmatrix} 0 \\ \zero \\ 0 \\ D \vec x^{(1-b)} - \Gamma C \vec S \\ -B \vec x^{(1-b)} + \Gamma A \vec S\end{pmatrix}^\top \matM \begin{pmatrix} 0 \\ \zero \\ A \vec y^{(b)} - B \vec T \\ C \vec y^{(b)} - D \vec T \\ \zero \end{pmatrix} \numberthis \label{eqn3}
    \end{align*}
    where we use the fact that for all $f \in Q_\sk$, we have the equality
    $f(\vec x^{(b)},\vec y^{(b)})=f(\vec x^{(1-b)},\vec y^{(1-b)})$.

    Evaluating equation $\mathsf{Eq}_b(\matM)$ on $A=B=D=0$ (by \textbf{Rule 2}), then using \textbf{Rule 1} on $\Gamma S_i C$ for all $i \in [n]$, and using \eqref{eqn1} and \eqref{eqn3}, we obtain that the coefficient $M_{n+2+i,1}=0$ for all $i \in [n]$.
    Evaluating $\mathsf{Eq}_b(\matM)$ on $B=C=D=0$ (by \textbf{Rule 2}), then using \textbf{Rule 1} on $\Gamma S_i A$ for all $i \in [n]$, and using \eqref{eqn1} and \eqref{eqn3}, we obtain that the coefficient $M_{2n+2+i,1}=0$ for all $i \in [n]$. Thus, we have:
    \begin{equation}\label{eqn4}
    \forall \beta \in \zo: \begin{pmatrix} 0 \\ \zero \\ 0 \\ D \vec x^{(\beta)} - \Gamma C \vec S \\ -B \vec x^{(\beta)} + \Gamma A \vec S\end{pmatrix}^\top \matM \begin{pmatrix} 1 \\ \zero \\ \zero \\ \zero \\ \zero \end{pmatrix} = 0.
    \end{equation}

    Evaluating equation $\mathsf{Eq}_b(\matM)$ on $A=C=D=0$ (by \textbf{Rule 2}), then using \textbf{Rule 1} on $B T_j$ for all $i \in [n]$, and using \eqref{eqn3}, we obtain that the coefficient $M_{1,n+1+j}=0$ for all $j \in [n]$.
    Evaluating $\mathsf{Eq}_b(\matM)$ on $A=B=C=0$ (by \textbf{Rule 2}), then using \textbf{Rule 1} on $D T_j$ for all $j \in [n]$, and using \eqref{eqn3}, we obtain that the coefficient $M_{1,2n+1+j}=0$ for all $j \in [n]$. Thus, we have:
    \begin{equation}\label{eqn5}
    \forall \beta \in \zo: \begin{pmatrix} 1 \\ \zero \\ 0 \\ \zero \\ \zero \end{pmatrix}^\top \matM \begin{pmatrix} 0 \\ \zero \\ A \vec y^{(\beta)} - B \vec T \\ C \vec y^{(\beta)} - D \vec T \\ \zero \end{pmatrix} = 0.
    \end{equation}
    Overall, we have:
        \begin{align*}
        \mathsf{Eq}_b(\matM): &\begin{pmatrix} 1 \\ (AD-BC) \vec S \\ (AD-BC)\Gamma \\ D \vec x^{(b)} - \Gamma C \vec S \\ -B \vec x^{(b)} + \Gamma A \vec S\end{pmatrix}^\top \matM \begin{pmatrix} 1 \\ \vec T \\ A \vec y^{(b)} - B \vec T \\ C \vec y^{(b)} - D \vec T \\ (f(\vec S, \vec T))_{ f \in  Q_\sk}\end{pmatrix} = 0
        \end{align*}
        which implies the following relation, under~\eqref{eqn1},\eqref{eqn2},\eqref{eqn4},\eqref{eqn5}
        \begin{align*}
&\begin{pmatrix} 1 \\ (AD-BC) \vec S \\ (AD-BC)\Gamma \\ \zero \\  \zero \end{pmatrix}^\top \matM \begin{pmatrix} 1 \\ \vec T \\ \zero \\ \zero \\ (f(\vec S, \vec T))_{ f \in  Q_\sk}\end{pmatrix} \\
        &+ \begin{pmatrix} 0 \\ \zero \\ 0 \\ D \vec x^{(b)} - \Gamma C \vec S \\ -B \vec x^{(b)} + \Gamma A \vec S\end{pmatrix}^\top \matM \begin{pmatrix} 0 \\ \zero \\ A \vec y^{(b)} - B \vec T \\ C \vec y^{(b)} - D \vec T \\ \zero\end{pmatrix} = 0 \\
        \end{align*}
        and then, under~\eqref{eqn3}
        \begin{align*}
        &\begin{pmatrix} 1 \\ (AD-BC) \vec S \\ (AD-BC)\Gamma \\ \zero \\  \zero \end{pmatrix}^\top \matM \begin{pmatrix} 1 \\ \vec T \\ \zero \\ \zero \\ (f(\vec S, \vec T))_{ f \in  Q_\sk}\end{pmatrix} \\
        & + \begin{pmatrix} 0 \\ \zero \\ 0 \\ D \vec x^{(1-b)} - \Gamma C \vec S \\ -B \vec x^{(1-b)} + \Gamma A \vec S\end{pmatrix}^\top \matM \begin{pmatrix} 0 \\ \zero \\ A \vec y^{(1-b)} - B \vec T \\ C \vec y^{(1-b)} - D \vec T \\ \zero\end{pmatrix} = 0.
        \end{align*}
        Under~\eqref{eqn1},\eqref{eqn2},\eqref{eqn4},\eqref{eqn5}, this implies
        \begin{align*}
        \mathsf{Eq}_{1-b}(\matM): &\begin{pmatrix} 1 \\ (AD-BC) \vec S \\ (AD-BC)\Gamma \\ D \vec x^{(1-b)} - \Gamma C \vec S \\ -B \vec x^{(1-b)} + \Gamma A \vec S\end{pmatrix}^\top \matM \begin{pmatrix} 1 \\ \vec T \\ A \vec y^{(1-b)} - B \vec T \\ C \vec y^{(1-b)} - D \vec T \\ (f(\vec S, \vec T))_{ f \in  Q_\sk}\end{pmatrix} = 0
        \end{align*}
    
\end{proof}

\end{document}